\DeclareMathOperator*{\argmax}{argmax}
\newtheorem{theorem}{Theorem}
\newtheorem{definition}[theorem]{Definition}
\newtheorem{lemma}[theorem]{Lemma}
\newtheorem{proposition}[theorem]{Proposition}
\newtheorem{assumption}[theorem]{Assumption}
\theoremstyle{nonumberplain}
\newtheorem{proof}{Proof}
\begin{document}

%

%

\twocolumn[

\aistatstitle{Diversified Sampling for Batched Bayesian Optimization with Determinantal Point Processes}
\aistatsauthor{ Elvis Nava \And Mojm\'ir Mutn\'y \And  Andreas Krause }

\aistatsaddress{ ETH Zurich\\\texttt{elvis.nava@ai.ethz.ch}  \And  ETH Zurich\\\texttt{mojmir.mutny@inf.ethz.ch} \And ETH Zurich\\\texttt{krausea@inf.ethz.ch} } ]

\begin{abstract}
\looseness -1  In Bayesian Optimization (BO) we study black-box function optimization with noisy point evaluations and Bayesian priors. Convergence of BO can be greatly sped up by batching, where multiple evaluations of the black-box function are performed in a single round. The main difficulty in this setting is to propose at the same time diverse and informative batches of evaluation points. In this work, we introduce {\em DPP-Batch Bayesian Optimization (DPP-BBO)}, a universal framework for inducing batch diversity in sampling based BO by leveraging the repulsive properties of Determinantal Point Processes (DPP) to naturally diversify the batch sampling procedure. We illustrate this framework by formulating DPP-Thompson Sampling (DPP-TS) as a variant of the popular Thompson Sampling (TS) algorithm and introducing a Markov Chain Monte Carlo procedure to sample from it. We then prove novel Bayesian simple regret bounds for both classical batched TS as well as our counterpart DPP-TS, with the latter bound being tighter. Our real-world, as well as synthetic, experiments demonstrate improved performance of DPP-BBO over classical batching methods with Gaussian process and Cox process models. 
\end{abstract}

\begin{figure*}[h]
\vspace{-0.4cm}
\begin{tabular}{c@{\hskip-5mm}c@{\hskip-5mm}c}
  \includegraphics[width=60mm]{{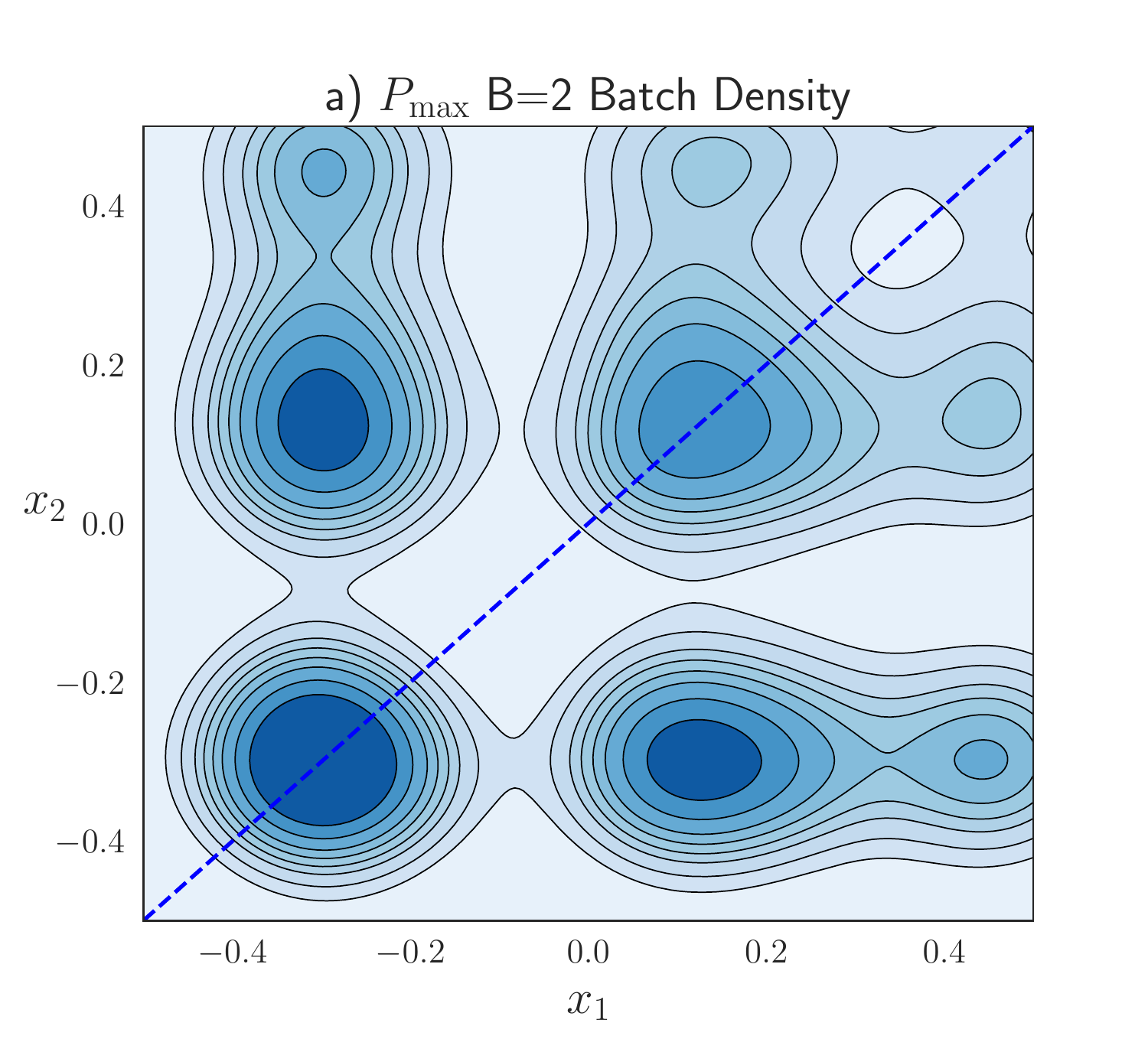}} &   \includegraphics[width=60mm]{{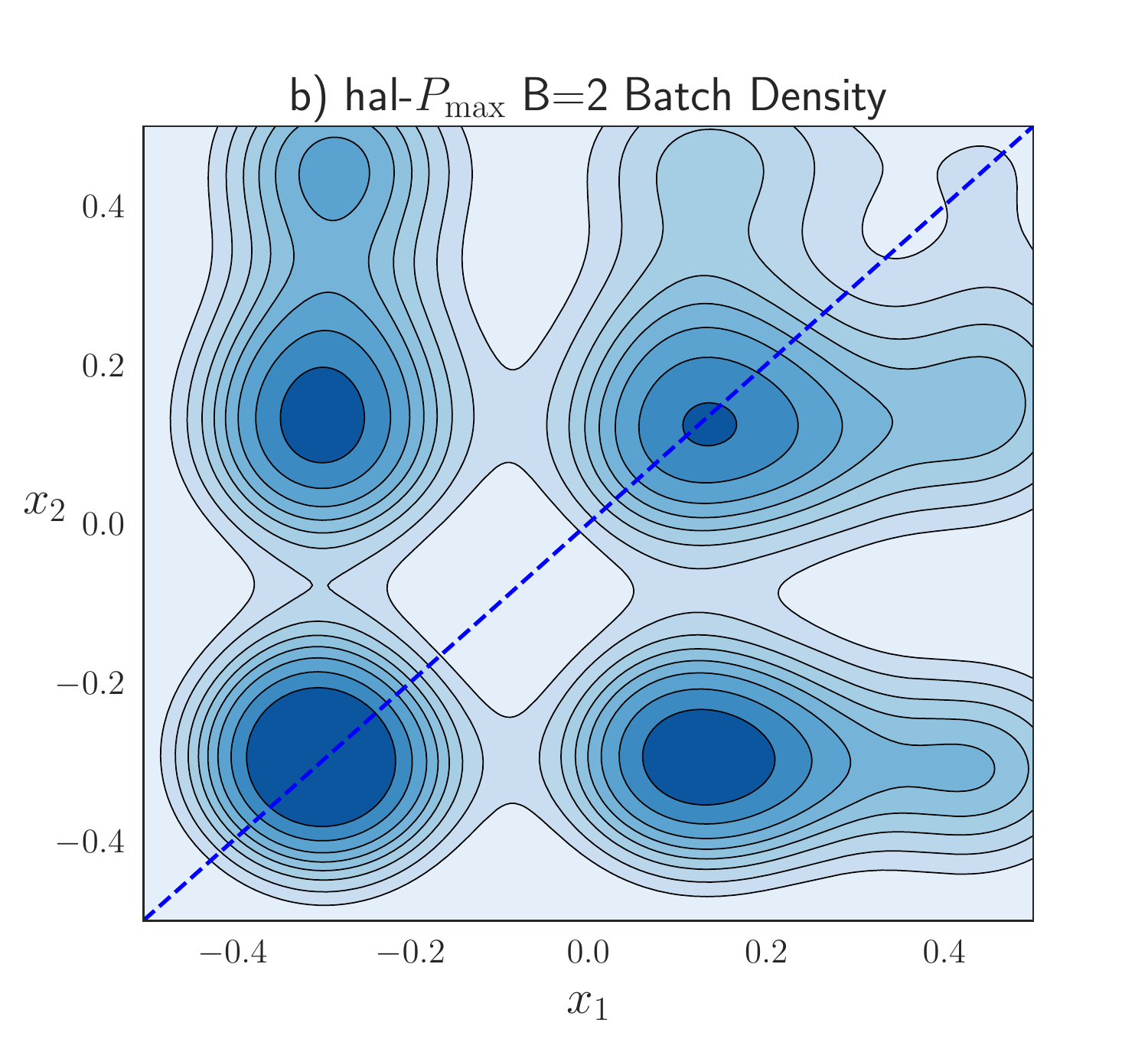}} & \includegraphics[width=60mm]{{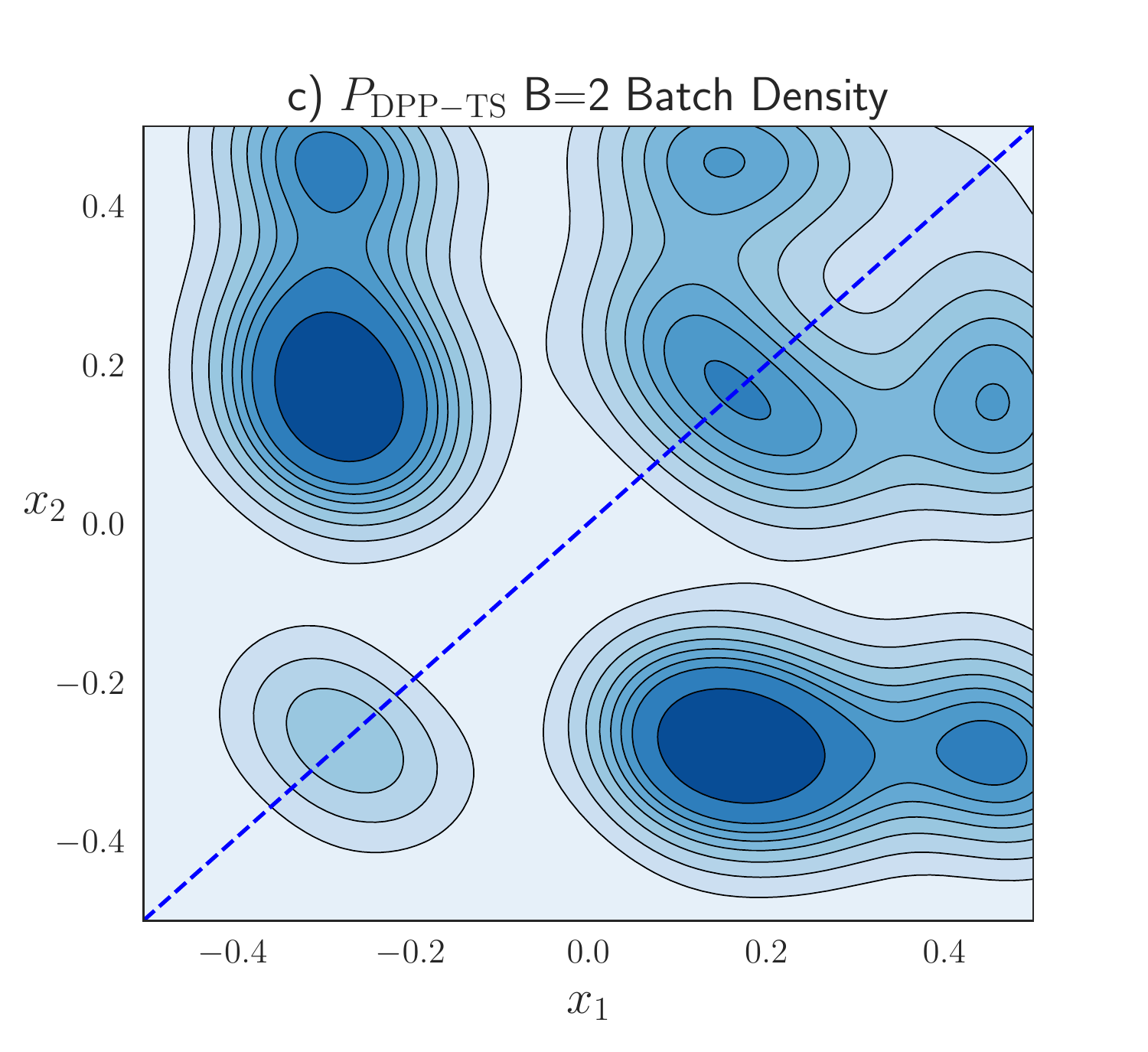}}
\end{tabular}
\vspace{-0.4cm}
\caption{\looseness -1 Diversification Demonstration. Given a Gaussian Process posterior on $f$ defined over $\mathcal{X} = [-0.5,0.5]$, we sample a batch of $B=2$ evaluation points for our next optimization iteration using a randomized Batch BO algorithm. With Thompson Sampling (a), this corresponds to sampling from the symmetric 2d distribution $P_{\max}(x_1,x_2) = p_{\max}(x_1)p_{\max}(x_2)$. We wish to sample diverse batches, therefore we would like to reduce the probability mass near the diagonal (where $x_1 = x_2$). To do so, we can use hallucinated observations (b) or our DPP-TS sampling distribution (c), which exploits DPP repulsion properties. It is apparent how $P_{\text{DPP-TS}}$, by assigning much less probability mass to locations near the diagonal, disfavors the selection of non-diverse batches.}
\label{fig:pmaxfigs}
\end{figure*}

\section{INTRODUCTION}

Gradient-free optimization of noisy black-box functions is a broadly relevant problem setting, with a multitude of applications such as de-novo molecule design \citep{gonzalez_bayesian_2015}, electron laser calibration \citep{Kirschner2019, Kirschner2019b}, and hyperparameter selection \citep{snoek_practical_2012} among many others. Several algorithms have been devised for such problems, some with theoretical guarantees, broadly referred to as Bayesian optimization \citep{Mockus1982} or multi-armed bandits \citep{berry_bandit_1985}. Our work falls into Bayesian optimization as we assume a known prior for the unknown function, and use evaluated points to update our belief about the function. 

In BO, the optimization procedure is performed sequentially by evaluating the noisy function on locations informed by past observations. In  many real world applications, multiple evaluations (experiments) can be executed in parallel. We refer to this setting as {\em batched Bayesian optimization (Batch BO)}. This is a common situation when the experimental process is easily parallelizable, such as in high-throughput wetlab experiments, or parallel training of multiple ML models on a cluster.

A main concern in the batched setting is \emph{batch diversification}: guaranteeing that the selected experimental batch does not perform redundant evaluations. We tackle this problem via Determinantal Point Processes (DPP) \citep{kulesza_determinantal_2012}, a family of repulsive stochastic processes on sets of items.
DPPs have already been successfully employed for Experimental Design \citep{derezinski_bayesian_2020}, optimization \citep{Mutny2020b}, and in combination with a deterministic batched Bayesian optimization algorithm \citep{kathuria_batched_2016}. In this work, we show how DPP-based diversification can naturally, and in a principled manner, be integrated into  randomized algorithms for BO. Of special interest is the Thompson sampling BO algorithm, which is randomized but universally applicable \citep{Thompson1933}, often empirically outperforms UCB \citep{Chapelle2011}, and in some cases has better computational properties \citep{Mutny2020}.

\subsection{Our Contribution}
In this work we introduce a framework for randomized Batched Bayesian Optimization diversification through DPPs (DPP-BBO). Our main result is an algorithm called DPP-TS, which samples from a Regularized DPP, capturing both Thompson Sampling (posterior sampling) and information-theoretic batch diversity. We use a Markov Chain Monte Carlo (MCMC) approach adapted from the DPP literature \citep{anari_monte_2016} to sample batches for this new algorithm. We establish improved Bayesian Simple Regret bounds for DPP-TS compared to classical batching schemes for Thompson Sampling, and experimentally demonstrate its effectiveness w.r.t.~BO baselines and existing techniques, both on synthetic and real-world data. 
Lastly, we demonstrate the generality of our diversification framework by applying it on an alternative randomized BO algorithm called \emph{Perturbed History Exploration} (PHE) \citep{kveton_perturbed-history_2020}; and extend it to cover Cox Process models in addition to classically assumed Gaussian Processes.

\section{BACKGROUND}\label{background}
\paragraph{Bayesian Optimization}
The problem setting for Bayesian Optimization (BO) is as follows: we select a sequence of actions $x_t \in \mathcal{X}$, where $t$ denotes the \textit{iteration count} so that $t \in [1,T]$, and $\mathcal{X}$ is the action domain, which is either discrete or continuous. For each chosen action $x_t$, we observe a noisy reward $y_t = f(x_t) + \epsilon_t$ in sequence, where $f : \mathcal{X} \rightarrow \mathbb{R}$ is the unknown reward function, and $\epsilon_t$ are assumed to be i.i.d.~Gaussian s.t.~$\epsilon_t \sim \mathcal{N}(0, \sigma^2)$ with known variance. Most BO algorithms select each point $x_t$ through  maximization of an \textit{acquisition function} $x_t = \argmax_{x \in \mathcal{X}} u_t(x|D_{t-1})$, determined by the state of an internal Bayesian model of $f$. We indicate with $D_{t-1} = \lbrace (x_1,y_1), \ldots, (x_{t-1},y_{t-1}) \rbrace$ the filtration consisting of the history of evaluation points and observations up to and including step $t-1$ on which the model is conditioned on. The main algorithmic design choices in BO are which  acquisition function and which internal Bayesian model of $f$ to use.

\paragraph{Gaussian Processes} Obtaining any  theoretical convergence guarantees in infinite or continuous domains is impossible without assumptions on the structure of $f$. A common assumption in Bayesian Optimization is that $f$ is a sample from a Gaussian Process (GP) \citep{rasmussen_gaussian_2005} prior, which has the property of being versatile yet allowing for posterior updates to be obtained in closed form. Many BO algorithms make use of an internal GP model of $f$, which is initialized as a prior and then sequentially updated from feedback. This GP is parametrized by a kernel function $k(x,x^\prime)$ and a mean function $\mu(x)$. To denote that $f$ is sampled from the GP, we write $f \sim GP(\mu, k)$.

\paragraph{Regret Minimization}
We quantify our progress towards maximizing the unknown $f$ via the notion of \textit{regret}. In particular, we define the \textit{instantaneous regret} of action $x_t$ as $r_t =f(x^\star) - f(x_t)$, with $x^\star = \argmax_{x \in \mathcal{X}} f(x)$ being the optimal action. A common objective for BO is that of minimizing {\em Bayesian Cumulative Regret} $\text{BCR}_T = \mathbb{E}\left[\sum_{t=1}^T r_t\right] = \mathbb{E}\left[\sum_{t=1}^T \left( f(x^\star) - f(x_t) \right)\right]$, where the expectation is over the prior of $f$, observation noise and algorithmic randomness. Obtaining bounds on the cumulative regret that scale sublinearly in $T$ allows us to prove convergence of the \textit{average regret} $\text{BCR}_T / T$, therefore also minimizing the {\em Bayesian Simple Regret} $\text{BSR}_T = \mathbb{E}\left[\min_{t \in [1,T]} r_t\right] = \mathbb{E}\left[\min_{t \in [1,T]} f(x^\star) - f(x_t) \right]$ and guaranteeing convergence of our optimization of $f$.

\paragraph{Batch Bayesian Optimization}
We define {\em Batch Bayesian Optimization (BBO)} as the setting where, instead of sequentially proposing and evaluating points, our algorithms propose a batch of points of size $B$ at every iteration $t$. Importantly, the batch must be finalized {\em before} obtaining any feedback for the elements within it. Batched Bayesian Optimization algorithms encounter two main challenges with respect to performance and theoretical guarantees: proposing diverse evaluation batches, and obtaining regret bounds competitive with full-feedback sequential algorithms, sublinear in the total number of experiments $BT$, where $T$ denotes the iteration count $T$ and $B$ the batch size.

\paragraph{Determinantal Point Processes (DPPs)}
\citep{kulesza_determinantal_2012} are a family of point processes characterized by the property of \textit{repulsion}. We define a point process $P$ over a set $\mathcal{X}$ as a probability measure over subsets of $\mathcal{X}$. Given a similarity measure for pairs of points in the form of a kernel function, 
Determinantal Point Processes place high probability on subsets that are \textit{diverse}  according to the kernel.

We will now describe DPPs for finite domains due to their simplicity, however their definition can be extended to continuous $\mathcal{X}$. For our purposes, we restrict our focus on L-ensemble DPPs: given a so-called L-ensemble kernel $L$ defined as a matrix over the entire (finite) domain $\mathcal{X}$, a Determinantal Point Process $P_L$ is defined as the point process such that the probability of sampling the set $X \subseteq \mathcal{X}$ is proportional to the determinant of the kernel matrix $L_X$ restricted to $X$
\begin{equation}
P_{L}(X) \propto \det\left( L_X \right)\text{.}
\end{equation}
\looseness -1 Remarkably, the required normalizing constant can be obtained in closed form as $\sum_{X \subseteq \mathcal{X}} \det(L_X) = \det(L+I)$.

If the kernel $L$ is such that $L_{ij} = l(x_i, x_j)$,  for $x_i, x_j \in \mathcal{X}$, encodes the similarity between any pair of points $x_i$ and $x_j$, then the determinant $\det\left( L_X \right)$ will be greater for diverse sets $X$. Intuitively, for the linear kernel, diversity can be measured by the area of the $|X|$-dimensional parallelepiped spanned by the vectors in $X$ \citep[see Section 2.2.1 from][]{kulesza_determinantal_2012}.

For our application, we require sampling of batches of points with a specific predetermined size. For this purpose, we focus on $k$-DPPs. A $k$-DPP $P_L^k$ over $\mathcal{X}$ is a distribution over subsets of $\mathcal{X}$ with fixed cardinality $k$, such that the probability of sampling a specific subset $X$ is proportional to that for the generic DPP case:
$P_L^k(X) = \frac{\det\left( L_X \right)}{\sum_{X^\prime \subseteq \mathcal{X}, |X^\prime| = k} \det\left(L_{X^\prime}\right)}$.

Sampling from DPPs and $k$-DPPs can be done with a number of efficient exact or approximate algorithms. The seminal exact sampling procedure for $k$-DPPs from \citet{deshpande_efficient_2010} requires time $O(kN^{\omega + 1}\log N)$ in the batch size $k$ and the size of the domain $N$, with $\omega$ being the exponent of the arithmetic complexity of matrix multiplication. This does not scale well for large domains, nor does it work for the continuous case. Fortunately, an efficient MCMC sampling scheme with complexity of $O(Nk\log(\epsilon^{-1}))$ introduced by \citet{anari_monte_2016} works much better in practice. Variants of such MCMC schemes have been proven to also work for continuous domains  \citep{rezaei_polynomial_2019}.

\section{RELATED WORK}
A number of different acquisition functions have been proposed for Bayesian Optimization, such as Probability of Improvement, Expected Improvement, Upper Confidence Bound (UCB) among many others  \citep[cf., ][]{brochu_tutorial_2010}. The Gaussian process version of UCB \citep[GP-UCB,][]{srinivas_gaussian_2010} is a popular technique based on a deterministic acquisition function, with sublinear regret bounds for common kernels.

\paragraph{Thompson Sampling}
Thompson Sampling is an intuitive and theoretically sound BO algorithm using a randomized acquisition function \citep{Thompson1933,russo_tutorial_2020}. When choosing the next evaluation point, we sample a realization from the current posterior modeling the objective function, and use this as the acquisition function to maximize $x_t = \argmax_{x \in \mathcal{X}}\tilde{f}(x)$ where $\tilde{f}$ is the sample function, e.g. $\tilde{f} \sim \text{GP}(\mu_t,K_t)$.
Bayesian Cumulative Regret was first bounded as $O(\sqrt{T\gamma_T})$ by \citet{russo_learning_2014}, where $\gamma_T$ is the maximum mutual information obtainable from $T$ observations (for more details on this well established quantity, see Appendix \ref{infobackground}). This bound matches lower bounds in $T$ \citep{Scarlett2017}. 

\paragraph{Batched UCB and Pure Exploration}\label{bucbbackground}
For Batched BO, heuristic algorithms such as Simulation Matching \citep{azimi_batch_2010} or Local Penalization \citep{pmlr-v51-gonzalez16a} attempt to solve the problem of generating informative and diverse evaluation point batches, albeit without theoretical guarantees on regret. In particular, Local Penalization selects explicitly diversified batches by greedily penalizing already-sampled points with penalization factors in the acquisition function.

\citet{desautels_parallelizing_2014} are the first to provide a theoretically justified batched algorithm, introducing GP-BUCB, a batched variant of GP-UCB. To induce diversity within batches, they use \textit{hallucinated observations}, so that ${x_{\text{GP-BUCB}}}_{t,b}$ is sampled by maximizing a UCB based on the hallucinated posterior $\tilde{D}_{t,b-1}$. The hallucinated history is constructed by using the posterior mean in place of the observed reward for points with delayed feedback.
GP-BUCB attains a cumulative regret bound of $O\left(\sqrt{TB \beta_{TB} \gamma_{TB}}\right)$, which, however, requires an \textit{initialization phase} before the deployment of the actual algorithm. For the first $T_{\text{init}}$ iterations, the evaluations are chosen with Uncertainty Sampling, picking the point satisfying $x_t = \argmax_{x \in \mathcal{X}} \sigma_t(x)$, effectively exploring the whole domain, which limits the practicality of the method. To alleviate this, \citet{contal_parallel_2013} introduce the alternative GP-UCB Pure Exploration  \citep[GP-UCB-PE,][]{contal_parallel_2013}, which mixes the UCB acquisition function with a Pure Exploration strategy. Sampling a batch at timestamp $t$, GP-UCB-PE operates in two phases: the first point of each batch is sampled with standard GP-UCB, while the remaining $B-1$ points are sampled by first defining a \textit{high probability region} $\mathfrak{R}^+$ for the maximizer, and then performing Uncertainty Sampling ${x_{\text{UCB-PE}}}_{t,b} = \argmax_{x \in \mathfrak{R}^+} \sigma_{t,b}(x)$. GP-UCB-PE's  cumulative regret is bounded by $O\left(\sqrt{TB \beta_{TB} \gamma_{TB}}\right)$ without an initialization phase, as opposed to GP-BUCB.

\paragraph{Batched TS}
\citet{kandasamy_parallelised_2018} are first to consider batching with Thompson sampling and GPs. They propose to simply resample multiple times from the posterior within each batch, effectively lifting the Thompson Sampling algorithm as-is to the batched case. By repeating TS sampling for each point within the batch, they bound the Bayesian cumulative regret by $O\left(\sqrt{TB \beta_{TB} \gamma_{TB}}\right)$. It is possible but not required to use hallucinated observations (hal-TS). However, an initialization phase identical to that of GP-BUCB is needed for the their proof on the bound to hold. A novel result from our work is an improved proof technique such that the initialization phase for Batched TS is not required for the Bayesian simple regret version of the bound to hold.

\paragraph{DPPs in Batched BO}
\citet{kathuria_batched_2016} use Determinantal Point Process sampling to define a variation of GP-UCB-PE \citep{contal_parallel_2013}, called UCB-DPP-SAMPLE.
They observe that the Uncertainty Sampling phase of GP-UCB-PE corresponds to greedy maximization of the posterior covariance matrix determinant $\det\left({K_{t,1}}_X\right)$ with respect to batches $X$ of size $B-1$ from $\mathfrak{R}^+$, with ${K_{t,1}}_X$ being the covariance matrix produced by the posterior kernel of the GP after step $(t,1)$ and restricted to the set $X$. Finding the $(B-1)$-sized submatrix of the maximum determinant is an NP-hard problem, and picking each element greedily so that it maximizes $\sigma^2_{t,b}(x) = k_{t,b}(x,x)$ fails to guarantee the best solution. Maximizing the above determinant is also equivalent to maximizing $\det\left({L_{t,1}}_X\right)$ for the DPP L-ensemble kernel defined as $L_{t,1} = I + \sigma^{-2}K_{t,1}$, called the \emph{mutual information kernel} \citep{kathuria_batched_2016}.

Instead of selecting the last $B-1$ points of each batch with Uncertainty Sampling, UCB-DPP-SAMPLE samples them from a $(B-1)$-DPP restricted to $\mathfrak{R}^+$ with the Mutual Information L-ensemble kernel $L_{t,1} = I + \sigma^{-2}K_{t,1}$. 
\citet{kathuria_batched_2016} provide a bound for UCB-DPP-SAMPLE as a variation of the $O\left(\sqrt{TB \beta_{TB} \gamma_{TB}}\right)$ bound for GP-UCB-PE. However, as we illustrate in Appendix \ref{appendix:kathuriabound}, their bound is \emph{necessarily worse} than the existing one for GP-UCB-PE.

The concurrent work of \citet{nguyen_optimal_2021} is another recent example of DPP usage in BBO diversification, proposing DPP sampling (with DPP kernel informed by a GP posterior) as a method of diverse batch selection, demonstrating good performance in experimental tasks, but no known theoretical regret guarantees.

\section{THE DPP-BBO FRAMEWORK}\label{dppbboframework}
A key insight our approach relies on is to view Thompson Sampling as a procedure that samples at each step from a \textit{maximum distribution} $p_{\max}$ over $\mathcal{X}$, so that $x_t \sim p_{\max, t}$ with
\begin{gather}
p_{\max,t}(x) = \mathbb{E}_{\tilde{f} \sim \text{Post}_t}\bigg[ \mathds{1}[x = \argmax_{x^\prime \in \mathcal{X}}\tilde{f}(x^\prime)]\bigg] \text{.}
\end{gather}

A simple approach towards Batched Thompson Sampling is to  obtain a batch $X_t$ of evaluation points (with $|X_t| = B$) by sampling $B$ times from the posterior in each round. This can again be interpreted as
\begin{equation}
X_t \sim P_{\max,t} ~ \text{with} ~ P_{\max,t}(X) = \prod_{x_b \in X} p_{\max,t}(x_b) \text{.}
\end{equation}
This way, we can view Thompson Sampling or any other randomized Batch BO algorithm as iteratively sampling from a batch distribution over $\mathcal{X}^B$ dependent on $t$. The main downside of this simple approach is that {\em independently} obtaining multiple samples may lead to redundancy. As a remedy, in our DPP-BBO framework, we modify such sampling distributions by {\em reweighing} them by a DPP likelihood.
This technique 
is general, and allows us to apply DPP diversification to {\em any }randomized BBO algorithm with batch sampling likelihood $P_{\text{A},t}(X)$.

\begin{definition}[DPP-BBO Sampling Likelihood]\label{dppbbodef}
The batch sampling likelihood of generic DPP-BBO at step $t$ is
\begin{equation}
P_{\text{DPP-BBO}, t}(X) \propto P_{\text{A}, t}(X) \det({L_t}_X)
\end{equation}
with $L_t$ being a DPP L-ensemble kernel defined over the domain $\mathcal{X}$.
\end{definition}
Notice that the domain $\mathcal{X}$ does not need to be discrete, even though we introduced the approach on discrete ground sets in order to simplify notation. This is in contrast to the existing DPP-based BO algorithm from \citet{kathuria_batched_2016}, which requires the domain to be discrete in order to efficiently sample the DPP restricted to the arbitrary region $\mathfrak{R}^+$ in the general case.

We now proceed to justify our formulation, defining the DPP-Thompson Sampling (DPP-TS) procedure in the process.

\subsection{The DPP quality-diversity decomposition}
DPPs capture element diversity but also take into account element \textit{quality} independently of the similarity measure, as illustrated by \citet{kulesza_determinantal_2012}. Namely, L-ensemble DPPs can be decomposed into a quality-diversity representation, so that the entries of the L-ensemble kernel for the DPP are expressed as $L_{ij} = q_i \phi_i^\top \phi_j q_j$ with $q_i \in \mathbb{R}^+$ representing the \textit{quality} of an item $i$, and $\phi_i \in \mathbb{R}^m$, $\|\phi_i\| = 1$ being normalized \textit{diversity} features. We also define $S$ with $S_{ij} =  \phi_i^\top \phi_j$. This allows us to represent the DPP model as $P_L(X) \propto \left(\prod_{i \in X} q_i^2 \right) \det(S_X)$.

We then consider a k-DPP with L-ensemble kernel $L$ in its quality-diversity representation, and re-weigh the quality values of items by their likelihood under a Bayesian Optimization random sampling scheme $P_{A,t}(x)$ such as Thompson Sampling $P_{\max,t}(x)$. Following this approach, we can obtain a new k-DPP likelihood by renormalizing the product of the Thompson Sampling likelihood of the batch $P_{\max}$ and an existing DPP likelihood $P_L$ for $X = \lbrace x_1, \ldots, x_B \rbrace$:
\begin{align}
\label{pbatch}
P_{\text{DPP-TS}}(X) & \propto \left(\prod_{x_b \in X} p_{\max}(x_b) \right) P_L(X)\\
& \propto \left(\prod_{x_b \in X} p_{\max}(x_b) \right) \det(L_X)\\
& \propto \left(\prod_{x_b \in X} p_{\max}(x_b) \cdot q_{x_b}^2 \right) \det(S_X)
\end{align}
The result is a k-DPP with L-ensemble kernel $\tilde{L}_{ij} = \sqrt{p_{\max}(x_i)p_{\max}(x_j)}L_{ij}$, generalizing the sampling distribution for batched TS as a stochastic process with repulsive properties. To recover original batched TS, we just need to set $L_t = I$.

\subsection{The Mutual Information Kernel}
For our choice of kernel, we follow the insight from \citet{kathuria_batched_2016} and use $L_t = I + \sigma^{-2} K_t$, with $K_t$ being the GP posterior kernel at step $t$. Consequently, the DPP loglikelihood of a set $X$ at time $t$ is proportional to the mutual information between the true function $f$ and the observations obtained from $X$: $I(f_X;\mathbf{y}_X|\mathbf{y}_{1:t-1,1:B}) = \frac{1}{2} \log \det (I + \sigma^{-2}{K_t}_X)$ (see Appendix \ref{infobackground}). This is an example of a so-called {\em Regularized k-DPP}, a k-DPP such that a symmetric positive semidefinite regularization matrix $A$ is added to an original unregularized L-ensemble DPP kernel, for the particular case of $A = \lambda I$. In such a setting, we allow for the same element to be selected multiple times and enforce that any set $X$ must have nonzero probability of being selected. By tuning the strength of the regularization, we can tune how extreme we wish our similarity repulsion to be.

\begin{definition}[DPP-TS Sampling Likelihood]\label{dpptsdef}
The batch sampling likelihood of DPP-TS at step $t$ is
\begin{equation}
P_{\text{DPP-TS}, t}(X) \propto P_{\max, t}(X) \det(I + \sigma^{-2}{K_t}_X) \text{.}
\end{equation}
\end{definition}
In Figure~\ref{fig:pmaxfigs}, we illustrate the $|X|=2$ case to compare the original $P_{\max}$ TS distribution, a TS variant with hallucinated observations, and $P_{\text{DPP-TS}}$ with its repulsion properties.

\subsection{Markov Chain Monte Carlo for DPP-BBO}

Sampling from the mutual information DPP component $\det(I + \sigma^{-2}{K_t}_X)$ of DPP-TS on its own can be done easily and efficiently, as numerous algorithms exist for both exact and approximate sampling from k-DPPs \citep{kulesza_determinantal_2012}. Likewise, we assume we are in a setting in which Thompson Sampling on its own can be performed relatively efficiently, as sampling from $P_{\max t}$ reduces to sampling a function realization $\tilde{f}$ from the posterior, e.g. $\text{GP}(\mu_t,K_t)$, and maximizing $\tilde{f}$ over $\mathcal{X}$.

However, when sampling from the product of the two distributions, we must resort to tools of approximate inference. The main issue with adopting standard approaches is that computation of the explicit likelihood $P_{\text{DPP-TS}, t}$ is {\em doubly intractable}: computation of $P_{\max, t}$ is intractable on its own, and it appears in the enumerator of $P_{\text{DPP-TS}, t}$ before normalization.

Our approach for sampling from $P_{\text{DPP-TS}, t}$ relies on a Markov Chain Monte Carlo (MCMC) sampler. We construct an ergodic Markov Chain over batches from $\Omega = \{ X ~|~ \; X \subset \mathcal{X}, |X| = k \}$ with transition kernel $T(X^{\prime} | X)$ such that the detailed balance equation $Q(X)T(X^{\prime} | X) = Q(X^{\prime})T(X | X^{\prime})$ is satisfied almost surely with respect to $P_{\text{DPP-TS}, t}$, with $Q(X) =\\ P_{\max, t}(X)\det({L_t}_X)$ being the unnormalized potential of $P_{\text{DPP-TS}, t}$.

If $Q(X)$ were tractable, we could use the standard Metropolis-Hastings algorithm \citep{hastings_monte_1970}, which satisfies the detailed balance equation. The problem with naively using Metropolis-Hasting MCMC sampling is that our $Q(X) = P_{\max, t}(X)\det({L_t}_X)$ contains $P_{\max, t}(X) = \prod_{x_b \in X} p_{\max, t}(x_b)$, which is intractable and cannot be computed on the fly. As previously stated, the only thing we can easily do is sample from it by sampling $\tilde{f}$ and then maximizing it. However, if we modify the standard Metropolis-Hastings MCMC algorithm by using $p_{\max, t}$ proposals, we obtain Algorithm \ref{algomcmc}, which satisfies detailed balance. We refer to Appendix \ref{mcmcappendix} for the proof. This algorithm can be interpreted as a variant of an existing k-DPP sampler proposed by \citet{anari_monte_2016}.
\begin{algorithm}[H]
\caption{DPP-TS MCMC sampler}\label{algomcmc}
\begin{algorithmic}
\State pick random initial batch $X$
\Repeat
	\State uniformly pick point $x_b \in X$ to replace
	\State sample candidate point $x_b^{\prime} \sim p_{\max, t}(x_b^{\prime})$
	\State define $X^{\prime} = \left(X \setminus \{x_b\}\right) \cup \{x_b^{\prime}\}$
	\State accept with probability $\alpha = \min \left\lbrace 1, \frac{\det({L_t}_{X^{\prime}})}{\det({L_t}_X)} \right
\rbrace$
	\If {accepted}
		\State $X$ = $X^{\prime}$
	\EndIf
\Until converged
\end{algorithmic}
\end{algorithm}
\begin{algorithm}[H]
\caption{DPP-TS Algorithm}\label{dpptsalgo}
\begin{algorithmic}
\item \textbf{Input:} Action space $\mathcal{X}$, GP prior $\mu_1$, $k_1(.,.)$, history $D_0 = \lbrace\rbrace$
\For{$t = 1, \ldots, T$}
	\State Sample $X_t \sim P_{\text{DPP-TS}, t}(X_t)$ with Alg.\ \ref{algomcmc}
	\State Observe $y_{t,b} = f(x_{t,b}) + \epsilon_{t,b}$ for $b \in [1,B]$
	\State Add observations to history $D_t = D_{t-1} \cup \lbrace (x_{t,1},y_{t,1}),\ldots,(x_{t,B},y_{t,B}) \rbrace$
	\State Update the GP with $D_t$ to get $\mu_{t+1}$, $k_{t+1}(.,.)$
\EndFor
\end{algorithmic}
\end{algorithm}
\subsection{DPP-TS}
Given the sampling distribution (Definition \ref{dpptsdef}) and the above MCMC algorithm, we can now fully specify the overall procedure for our DPP-TS sampling algorithm summarized in Algorithm \ref{dpptsalgo}.

\section{BAYESIAN REGRET BOUNDS}

We now establish bounds on the Bayesian regret. Instead of assuming the existence of a fixed true $f$, 
we assume that the true function is sampled from a Gaussian Process prior $f \sim GP(0,K)$.  

In particular, our regret bounds are obtained on a variant of Bayes regret called Bayes Batch Cumulative Regret $\text{BBCR}_{T,B} = \mathbb{E}\left[ \sum_{t=1}^T \min_{b \in [1,B]} r_{t,b} \right] = \mathbb{E}\left[ \sum_{t=1}^T \min_{b \in [1,B]} \left( f(x^{\star}) - f(x_{t,b}) \right) \right]$ which only considers the best instantaneous regret within each batch, as we make use of proof techniques from \citet{contal_parallel_2013} involving such a formulation. It is straightforward to see that by bounding BBCR we at the same time bound the Bayes Simple Regret (introduced in Section \ref{background}), as $\text{BSR}_{T,B} \leq \text{BBCR}_{T,B} / T$, similarly to how $\text{BSR}_{T,B} \leq \text{BCR}_{T,B} / TB$.

\subsection{Improved bound on BBCR for Batched Thompson Sampling}

Our first theoretical contribution is an improved version of the bound on Bayesian Simple Regret from \citet{kandasamy_parallelised_2018}. Our version of the algorithm requires no initialization procedure to guarantee sublinear regret in contrast to prior work.

Unlike the original Gaussian TS Bayesian bounds from \citet{russo_learning_2014}, \citet{kandasamy_parallelised_2018} analyze the problem over a continuous domain. Therefore, it requires an additional assumption previously used in the Bayesian continuous-domain GP-UCB bound \citep{srinivas_gaussian_2010}.
\begin{assumption}[Gradients of GP Sample Paths]\label{gradassumpt}
Let $\mathcal{X} \subseteq [0,l]^d$ compact and convex with $d \in \mathbb{N}$ and $l>0$, $f \sim \text{GP}(0,K)$ where $k$ is a stationary kernel. Moreover, there exist constants $a,b > 0$ such that
$P\left( \sup_{x\in \mathcal{X}} \left| \frac{\partial f(x)}{\partial x_i} \right| > L \right) \leq a e^{-(L/b)^2} \quad \forall L > 0, \forall i \in \lbrace 1, \ldots, d \rbrace$.
\end{assumption}
Using the above assumption, we can show the following theorem. 
\begin{theorem}[BBCR Bound for Batched TS]\label{tsbatchbound}
If $f \sim GP(0,K)$ with covariance kernel bounded by 1 and noise model $\mathcal{N}(0,\sigma^2)$, and either
\begin{itemize}
\item Case 1: finite $\mathcal{X}$ and $\beta_t = 2\ln\left(\frac{B(t^2 + 1)|\mathcal{X}|}{\sqrt{2\pi}}\right)$;
\item Case 2: compact and convex $\mathcal{X} \subseteq [0,l]^d$, with Assumption \ref{gradassumpt} satisfied and $\beta_t = 4(d+1)\log(Bt) + 2d\log(dab\sqrt{\pi})$.
\end{itemize}
Then Batched Thompson Sampling attains Bayes Batch Cumulative Regret of
\emph{
\begin{equation}
{\text{BBCR}_{\text{TS}}}_{T,B} \leq \frac{C_1}{B} + \sqrt{ C_2 \frac{T}{B} \beta_T \gamma_{TB}}
\end{equation}
}
with $C_1 = 1$ for Case 1, $C_1 = \frac{\pi^2}{6} + \frac{\sqrt{2\pi}}{12}$ for Case 2, and $C_2 = \frac{2}{\log(1 + \sigma^{-2})}$.
\end{theorem}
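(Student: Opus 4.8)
The plan is to control the best-in-batch regret per round and then harvest the $1/B$ improvement from the $\min_b$ operation. Writing $\mu_t,\sigma_t$ for the posterior mean and standard deviation conditioned on the first $t-1$ completed batches and $U_t(x)=\mu_t(x)+\sqrt{\beta_t}\,\sigma_t(x)$ for the associated upper confidence bound, the first step is to fix a high-probability event on which the confidence intervals hold, i.e. $|f(x)-\mu_t(x)|\le\sqrt{\beta_t}\,\sigma_t(x)$ simultaneously over the relevant points and rounds. In Case~1 this follows from a Gaussian tail bound and a union bound over the $|\mathcal{X}|$ points and the $T$ rounds, with the explicit $\beta_t=2\ln\!\big(B(t^2+1)|\mathcal{X}|/\sqrt{2\pi}\big)$ chosen precisely so that the summed failure probabilities telescope to a finite constant; in Case~2 I would instead invoke Assumption~\ref{gradassumpt} to build a discretization of $\mathcal{X}$ at resolution $\sim 1/(Bt^2)$, control the discretization error through the sub-Gaussian gradient tails, and union-bound over the discretization, which yields the stated $\beta_t$ for the continuous domain. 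The total probability mass of the failure events, when propagated through the regret, is exactly what produces the additive term $C_1/B$ (with $C_1$ differing between the two cases because of the discretization bookkeeping).

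On the good event I would decompose the best-in-batch regret using that $\min_{b}r_{t,b}=f(x^\star)-\max_{b}f(x_{t,b})$. Bounding $f(x^\star)\le U_t(x^\star)$ and $f(x_{t,b})\ge\mu_t(x_{t,b})-\sqrt{\beta_t}\,\sigma_t(x_{t,b})$, and then exploiting the key posterior-sampling property that, conditioned on the history $D_{t-1}$, the optimum $x^\star$ and each Thompson sample $x_{t,b}$ are identically distributed, I would argue
\begin{equation}
\mathbb{E}\big[U_t(x^\star)\,\big|\,D_{t-1}\big]\le \mathbb{E}\big[\max_{b}U_t(x_{t,b})\,\big|\,D_{t-1}\big],
\end{equation}
since the right-hand side is the expected maximum of $B$ i.i.d.\ copies of the quantity whose single-copy expectation equals the left-hand side. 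This cancels the confidence-bound term and leaves a per-round contribution of order $\sqrt{\beta_t}$ times a posterior standard deviation. Summing over $t$, applying Cauchy--Schwarz across the $TB$ point-round pairs, and invoking the information-gain bound $\sum \sigma^2\le C_2\gamma_{TB}$ (with $C_2=2/\log(1+\sigma^{-2})$ coming from the standard $\sigma_{t,b}^2\le C_2\cdot\tfrac12\log(1+\sigma^{-2}\sigma_{t,b}^2)$ inequality together with $\sum \tfrac12\log\det(I+\sigma^{-2}K)\le\gamma_{TB}$) is what I expect to generate the $\sqrt{C_2\,(T/B)\,\beta_T\,\gamma_{TB}}$ term, the factor $1/B$ entering because we pay only the best regret in each batch while all $TB$ evaluations still contribute to the information gain.

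The hard part, and the place where this proof must depart from \citet{kandasamy_parallelised_2018}, is reconciling the standard deviation that \emph{posterior sampling forces into the regret bound} — namely the start-of-batch $\sigma_{t,1}$, since every $x_{t,b}$ is drawn from the same round-$t$ posterior — with the \emph{hallucinated} variances $\sigma_{t,b}$ (conditioned on the earlier points of the batch) that naturally telescope into $\gamma_{TB}$. Prior work closes this gap with an initialization phase that uniformly bounds the ratio $\sigma_{t,1}/\sigma_{t,b}$ by a constant; a naive $\min_b\le\frac{1}{B}\sum_b$ reduction does not suffice here, because for a sharply concentrated posterior the $B$ Thompson samples can coincide, so a single round's best-in-batch regret scales with $\sigma_{t,1}$ while the batch yields only $\sim\log B$ of information gain. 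My plan is therefore to keep the $\min_b$ as a genuine best-of-$B$ quantity rather than an average and to argue at the level of the \emph{Bayesian} expectation: when the posterior is uncertain the samples $x_{t,1},\dots,x_{t,B}$ are automatically diverse (their pairwise posterior covariances are small), so the expected batch information gain matches the sum of individual posterior variances, whereas when the posterior is concentrated both the regret and the variance are small; formalizing this self-regulating diversity of i.i.d.\ Thompson samples, and thereby amortizing the expected best-in-batch regret against the expected per-batch information gain without any initialization, is the crux that I expect to require the most care.
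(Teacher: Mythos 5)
Your architecture is right up to the point you yourself flag as the crux, and there the proposal stops short of a proof. One smaller deviation first: the paper does not work with $\min_b r_{t,b}=f(x^\star)-\max_b f(x_{t,b})$ and an expected-maximum comparison of $B$ i.i.d.\ UCB values; it simply bounds $\min_b r_{t,b}\le r_{t,1}$ and applies the single-copy Russo--Van Roy identity $\mathbb{E}[U_{t,1}(x^\star)\mid D_{t-1}]=\mathbb{E}[U_{t,1}(x_{t,1})\mid D_{t-1}]$ to the first point of each batch only (Lemma \ref{tslemma1}). This leaves a per-round term $\sqrt{\beta_t}\,\mathbb{E}[\sigma_{t,1}(x_{t,1})]$ involving the start-of-batch deviation, which is exactly the quantity you correctly identify as needing to be reconciled with the within-batch deviations $\sigma_{t,b}$ that telescope into $\gamma_{TB}$.

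The genuine gap is in your last paragraph. You reject the averaging reduction and propose instead to ``formalize the self-regulating diversity of i.i.d.\ Thompson samples'' so that the expected batch information gain matches the sum of individual posterior variances; no argument is given for this, and it is not how the paper proceeds. The paper's observation (Lemmas \ref{tslemma2} and \ref{tslemma3}) is that the averaging reduction \emph{does} suffice once it is carried out in Bayesian expectation at the fixed point $x^\star$: conditioned on the relevant history, each $x_{t,b}$ is distributed as $x^\star$ while $\sigma_{t,b}(\cdot)$ is a deterministic function, so $\mathbb{E}[\sigma_{t,b}(x_{t,b})]=\mathbb{E}[\sigma_{t,b}(x^\star)]$; by the law of non-increasing variance $\mathbb{E}[\sigma_{t+1,1}(x^\star)]\le\mathbb{E}[\sigma_{t,b}(x^\star)]$, hence $\mathbb{E}[\sigma_{t+1,1}(x_{t+1,1})]\le\mathbb{E}[\sigma_{t,b}(x_{t,b})]$ for every $b$, and summing over $b$ and $t$ gives $\mathbb{E}[\sum_t\sigma_{t,1}(x_{t,1})]\le\frac{1}{B}\,\mathbb{E}[\sum_t\sum_b\sigma_{t,b}(x_{t,b})]$, after which Cauchy--Schwarz and the information-gain bound finish the proof exactly as you describe. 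Your worry about the $B$ samples coinciding when the posterior is concentrated is a pointwise concern that dissolves in expectation: even if the samples coincide, the expected hallucinated deviation at the identically distributed $x^\star$ still dominates the expected start-of-next-batch deviation. Without this step, or a substitute of comparable rigor, the proof is incomplete precisely where you predicted it would be hardest.
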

Therefore, ${\text{BSR}_{\text{TS}}}_{T,B} \leq \frac{C_1}{TB} + \sqrt{ C_2 \frac{1}{TB} \beta_T \gamma_{TB}}$. We point to Appendix \ref{tsbbcrboundproof} for proof.
The bound from \citet{kandasamy_parallelised_2018} (without an initialization phase) is similar, except for the presence of an $\exp(C)$ factor in the square root term, which scales linearly with $B$. Our version of the bound does not contain $\exp(C)$, allowing thus for sublinear regret in $B$.

\subsection{BBCR bound for DPP-TS}
We now shift the focus to our novel DPP-TS algorithm and obtain an equivalent bound. To do so, we modify the algorithm we developed and introduce DPP-TS-alt, so that for every batch:

a) For the first sample in the batch $x_{\text{DPP-TS-alt}\; t,1}$, we sample from $p_{\max, t}$ as in standard Thompson Sampling; b) For all the other samples $x_{\text{DPP-TS-alt}\; t,b}$ with $b \in [2,B]$, we sample from joint $P_{\text{DPP-TS }t}$, using the most updated posterior variance matrix $K_{t,1}$ to define the DPP kernel.

The reason why we introduced DPP-TS as such and not DPP-TS-alt in the first place is both for simplicity and because in practice their performance is virtually identical (see Appendix \ref{appendix:addexp}).
We have the following
\begin{theorem}[BBCR Bound for DPP-TS]\label{dpptsbatchbound}
Consider the same assumptions as for Theorem \ref{tsbatchbound}.
Then DPP-TS (in its DPP-TS-alt variant) attains Bayes Batch Cumulative Regret of
\emph{
\begin{equation}
{\text{BBCR}_{\text{DPP-TS}}}_{T,B} \leq \frac{C_1}{B} + \sqrt{ C_2 \frac{T}{B} \beta_T \gamma_{TB}} -C_3
\end{equation}
}
with $C_1 = 1$ for Case 1, $C_1 = \frac{\pi^2}{6} + \frac{\sqrt{2\pi}}{12}$ for Case 2, $C_2 = \frac{2}{\log(1 + \sigma^{-2})}$, and $-C_3 < 0$ (defined in Appendix \ref{dpptsbbcrboundproof}).
\end{theorem}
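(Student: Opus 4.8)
The plan is to piggyback on the already-established bound for batched Thompson Sampling (Theorem \ref{tsbatchbound}) and isolate the single place where the batch-sampling law enters, showing that the determinantal reweighting can only help there. First I would recall the skeleton of that proof. Conditioning on the history $\mathcal F_{t-1}$ and then, within a round, on the already-placed batch points, each freshly drawn posterior sample $x_{t,b}$ has the same law as $x^\star$; the Russo--Van Roy posterior-matching identity then yields, on the high-probability confidence event, $\mathbb E[r_{t,b}\mid\mathcal G_{t,b-1}]\le 2\sqrt{\beta_t}\,\mathbb E[\sigma_{t,b}(x_{t,b})\mid\mathcal G_{t,b-1}]$ plus a confidence-failure term, where $\mathcal G_{t,b-1}$ enlarges $\mathcal F_{t-1}$ by the first $b-1$ points of the batch. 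Using the pointwise bound $\mathbb E[\min_b r_{t,b}]\le\frac1B\sum_b\mathbb E[r_{t,b}]$ (the step that produces the crucial factor $1/B$), Cauchy--Schwarz over the $TB$ evaluations, and the information-gain inequality $\sum_{t,b}\sigma_{t,b}^2(x_{t,b})\le\frac{2}{\log(1+\sigma^{-2})}\gamma_{TB}$, one recovers the main term $\frac{C_1}{B}+\sqrt{C_2\frac{T}{B}\beta_T\gamma_{TB}}$, with $C_1$ collecting the summed failure probabilities.

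For DPP-TS-alt the first batch point is drawn by plain Thompson Sampling and the DPP kernel is frozen at $K_{t,1}$, so conditioned on $\mathcal F_{t-1}$ and on $x_{t,1}$ both algorithms share an identical state and only the joint law of the remaining $B-1$ points differs. I would therefore prove a single per-round comparison lemma: conditioned on this common state, $\mathbb E_{X\sim P_{\text{DPP-TS},t}}[\min_b r_{t,b}]\le\mathbb E_{X\sim P_{\max,t}}[\min_b r_{t,b}]-\delta_t$ with $\delta_t\ge0$. The leading right-hand term is exactly the quantity that the Theorem \ref{tsbatchbound} machinery controls, so summing over rounds gives the batched-TS bound minus $C_3:=\sum_t\mathbb E[\delta_t]$. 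The content of the lemma is that $P_{\text{DPP-TS},t}\propto P_{\max,t}\cdot\det(I+\sigma^{-2}{K_{t,1}}_X)$ is an exponential tilt of $P_{\max,t}$ toward sets of larger determinant, i.e.\ toward more diverse batches; since $x^\star$ is itself a posterior draw, a batch that covers the high-$p_{\max}$ region more evenly is more likely to contain a point close to $x^\star$, lowering $\min_b r_{t,b}$. I would make this quantitative along the tilting path $P_\theta\propto P_{\max,t}\det(\cdot)^\theta$ for $\theta\in[0,1]$, where $\frac{d}{d\theta}\mathbb E_\theta[\log\det]=\mathrm{Var}_\theta[\log\det]\ge0$ certifies a strictly positive increase in expected diversity (equivalently in expected batch mutual information), which I would then convert into the regret gap $\delta_t$.

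The hard part will be the comparison lemma, specifically turning ``more diverse in expectation'' into ``smaller expected $\min_b r_{t,b}$'' with a sign-definite, explicitly lower-boundable $\delta_t$. Two difficulties compound here: the law $P_{\text{DPP-TS},t}$ is doubly intractable and its points are correlated, so the clean per-substep posterior-matching used for TS is unavailable for $b\ge2$, which is precisely why the argument must be routed through a direct distributional comparison rather than through matched confidence bounds; and the coverage-to-regret step couples $f$, $x^\star$, and the sampler, so I expect to need a coupling of the two samplers (or a monotone rearrangement in the tilting parameter $\theta$) to guarantee that tilting toward high determinant never increases the expected minimum regret. I would define $C_3$ as the resulting, possibly conservative, cumulative lower bound on these gaps; upgrading $C_3\ge0$ to $C_3>0$ reduces to exhibiting at least one round in which the posterior is non-degenerate so that $\mathrm{Var}_\theta[\log\det]>0$. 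Because the leading term is inherited verbatim from Theorem \ref{tsbatchbound}, no new information-gain or confidence estimates are required, and essentially the whole burden rests on this diversity-gain lemma.
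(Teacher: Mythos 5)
There is a genuine gap, and it sits exactly where you place the ``whole burden'': the diversity-gain lemma $\mathbb{E}_{X\sim P_{\text{DPP-TS},t}}[\min_b r_{t,b}]\le\mathbb{E}_{X\sim P_{\max,t}}[\min_b r_{t,b}]-\delta_t$ is never established, and it is not the comparison the paper makes. Your tilting-path computation only certifies that the expected log-determinant (batch diversity) increases under the tilt; converting ``more diverse'' into ``smaller expected minimum regret'' is not just hard but dubious, because the determinant $\det(I+\sigma^{-2}{K_{t,1}}_X)$ reweights $P_{\max,t}$ toward points of \emph{high posterior variance}, which need not lie near $x^\star$ — the tilt can push mass away from the mode of $p_{\max}$ into uncertain regions, and nothing prevents $\min_b r_{t,b}$ from increasing in expectation within a given round. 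The paper sidesteps this entirely: the regret decomposition uses \emph{only} the first, plain-TS point of each batch, via $\mathbb{E}[\min_b r_{t,b}]\le\mathbb{E}[r_{t,1}]$ (Lemma \ref{tslemma1}), and the DPP-sampled points $b\ge 2$ enter only through their posterior standard deviations. The key lemma (Lemma \ref{dpptslemma2.2}) computes the marginal of a DPP-TS batch point explicitly — using $\det(I+\sigma^{-2}{K_t}_{X})=\prod_b(1+\sigma^{-2}\sigma^2_b(x_b))$, one gets $p_{\text{DPP-TS},t,B}(x)\propto p_{\max,t,1}(x)\,(1+\sigma^{-2}\sigma^2_{t,B}(x))$ — and reads off that this is a reweighting of $p_{\max}$ toward high-variance points, so $\mathbb{E}[\sigma_{t,B}(\tilde{x}_{t,B})]\le\mathbb{E}[\sigma_{t,B}(x_{\text{DPP-TS-alt},t,B})]$ with explicit nonnegative slack $\mathbb{E}[\delta(x)\sigma_{t,B}(x)]$; exchangeability of the joint DPP-TS law extends this to all $b\ge2$ (Lemma \ref{dpptslemma2.3}). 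This preserves the cross-batch telescoping $\mathbb{E}[\sigma_{t+1,1}(x_{t+1,1})]\le\mathbb{E}[\sigma_{t,b}(x_{t,b})]$ and hence Lemma \ref{tslemma3}, and $C_3$ is precisely the accumulated slack — no regret-vs-regret comparison between the two samplers is ever needed.

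A secondary issue: your reconstruction of the base TS argument ($\mathbb{E}[\min_b r_{t,b}]\le\frac1B\sum_b\mathbb{E}[r_{t,b}]$ plus per-point posterior matching) is not the paper's route and reintroduces the problem the paper is built to avoid. Posterior matching for $x_{t,b}$ with $b\ge2$ only controls the confidence width at the start of the batch, $\sigma_{t,1}(x_{t,b})$, not the hallucinated width $\sigma_{t,b}(x_{t,b})$ appearing in the information-gain bound; relating the two is what produces the $\exp(C)$ factor in \citet{kandasamy_parallelised_2018}. The $1/B$ in the paper comes instead from the variance comparison across batches (Lemmas \ref{tslemma2} and \ref{tslemma3}), which is also exactly the pathway through which the DPP modification is analyzed.
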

We can thus obtain ${\text{BSR}_{\text{DPP-TS}}}_{T,B} \leq \frac{C_1}{TB} - \frac{C_3}{T} + \sqrt{ C_2 \frac{1}{TB} \beta_T \gamma_{TB}}$. Moreover, this bound is necessarily tighter than that for standard TS: $\frac{C_1}{TB} - \frac{C_3}{T} + \sqrt{ C_2 \frac{1}{TB} \beta_T \gamma_{TB}} \leq \frac{C_1}{TB} + \sqrt{ C_2 \frac{1}{TB} \beta_T \gamma_{TB}}$. We point to Appendix \ref{dpptsbbcrboundproof} for the proof.

\begin{figure*}[h]
    \includegraphics[width=\textwidth]{{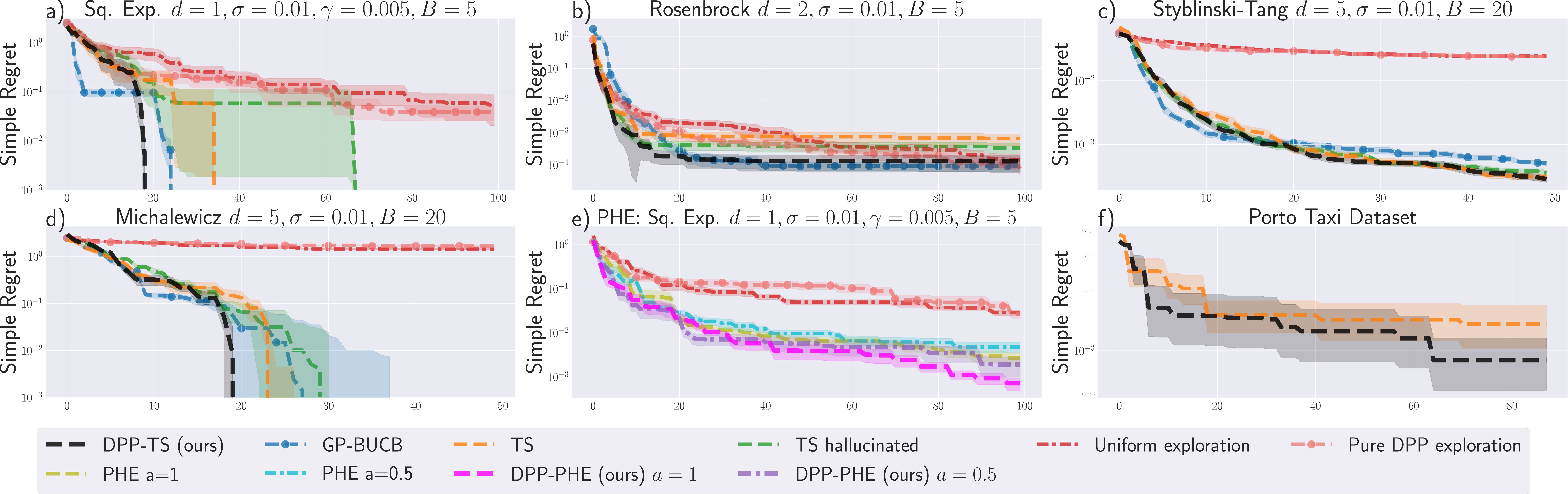}}
    \caption{Comprehensive experimental comparisons between DPP-TS and classic BBO techniques for Simple Regret (log scale): \textbf{a)} $f$ sampled from Squared Exponential GP; \textbf{b)} Rosenbrock; \textbf{c)} Styblinski-Tang; \textbf{d)} Michalewicz; \textbf{e)} PHE experiment with $f$ sampled from QFF Squared Exponential GP; \textbf{f)} Cox process sensing experiment on the Porto taxi dataset. The named functions are defined in Section \ref{dppexpsynth}. Overall, DPP-TS outperforms or equals the other algorithms, quickly sampling good maximizers thanks to improved batch diversification.}
    \vspace{-0.3cm}
    \label{fig:experimentcomp}
\end{figure*}

\section{EXPERIMENTS AND COMPARISONS}\label{expsection}


To make the case for our algorithmic framework's effectiveness in practice, we perform a series of benchmark tests on synthetic and real world optimization problems, comparing DPP-BBO against classic BBO algorithms on Simple Regret metrics. (Cumulative Regret comparisons feature in Appendix \ref{appendix:addexp}.)

\subsection{DPP-TS Comparisons on Synthetic Data} \label{dppexpsynth}

We first compare DPP-TS on synthetic benchmarks against regular batched TS, GP-BUCB, hallucinated TS (Batched Thompson Sampling with hallucinations as in GP-BUCB), Pure DPP Exploration (DPP sampling from the DPP component of DPP-TS) and Uniform Exploration (uniform random sampling over the domain). We exclude 
algorithms 
that are not applicable to continuous domains.

Figure \ref{fig:experimentcomp} details a number of such comparisons on synthetic benchmark functions under different settings, averaged over 15 experimental runs. For \ref{fig:experimentcomp}.a and \ref{fig:experimentcomp}.b we optimize over a discrete finite domain $\mathcal{X}$, using an exact Gaussian Process prior with a squared exponential kernel. The acquisition function is maximized by calculation of the explicit maximum over the discretized domain.

For \ref{fig:experimentcomp}.c and \ref{fig:experimentcomp}.d, we optimize over a continuous domain $\mathcal{X} = [0,l]^d$, using an approximate Gaussian Process prior specified with Quadrature Fourier Features \citep{mutny_efficient_2018}. These functions are additive and, hence, the optimization can be done dimension-wise. When optimizing the one-dimensional projection of the acquisition function we use first order gradient descent with restarts.

Specific benchmarks we use are the Rosenbrock function $f(x) = 100(x_2 - x_1^2)^2 + (x_1 - 1)^2$; the Stiblinski-Tang function $f(x) = \frac{1}{2} \sum_{i=i}^d \left(x_i^4 - 16x_i^2 + 5x_i \right)$; and the Michalewicz function $f(x) = -\sum_{i=i}^d \sin\left(x_i\right)\sin^{2d}\left(ix_i^2 / \pi\right)$.

Overall, DPP-TS converges very quickly to sampling good maximizers, almost always beating or at least equaling the Simple Regret performance of the other algorithms, while exhibiting low-variance behavior. The added diversity from the DPP sampling procedure appears to favor quickly finding better maxima while not getting stuck in suboptimal but high-confidence regions, as seems to often happen to GP-UCB. A series of additional experiments is discussed in Appendix \ref{appendix:addexp}, including experiments on Cumulative Regret, DPP-TS with parametrized DPP kernels, and a comparison between DPP-TS and DPP-TS-alt which shows them to be of equivalent performance in practice.

\subsection{DPP-Perturbed History Exploration}

To further demonstrate the effectiveness and versatility of the DPP-BBO framework, we apply it to the recently introduced Perturbed History Exploration (PHE) algorithm \citep{kveton_perturbed-history_2020}. PHE is a BO algorithm which is agnostic of the specific model $f_\theta$ chosen for modeling $f$. Assuming that rewards are bounded, and given a parameter $a$, the algorithm introduces {\em pseudo-rewards} $a$ for each observation in its global history, and at each step maximizes its learned perturbed $f_\theta$ to propose a new evaluation point. We can interpret this procedure as sampling from $p_{\text{PHE}, t}(x)$, with the stochastic component stemming from the pseudo-reward generation. Given this, we can define DPP-PHE as $P_{\text{DPP-PHE}, t}(X) \propto \left(\prod_{x \in X} p_{\text{PHE}, t}(x)\right) \det(I + \sigma^{-2}{K_t}_X)$ where $K_t$ is an approximation of the Bayesian posterior covariance for the $f_\theta$ model.

Figure \ref{fig:experimentcomp}.e experimentally compares PHE and DPP-PHE for $a=0.5$ and $a=1$ on a synthetic function (over a continuous $\mathcal{X}$) sampled from a 1-d squared exponential GP prior, while using as internal model a QFF GP regression. We can see that DPP-PHE improves on the Simple Regret when compared to regular PHE for the same $a$.

\subsection{DPP-TS and Cox Process Sensing}

\looseness -1 To benchmark our DPP-TS algorithm on a real world setting and demonstrate the versatility of the modeling choice, we turn to a Cox Process Sensing problem in the form of taxi routing on a 2-dimensional city grid, as considered by \citet{Mutny2021a}. Given a dataset of geo-localized taxi cab hails in Porto and a subdivision of the city into an 8x8 grid, we aim to learn the best locations where to schedule a fleet of taxis while, at beginning of each day - corresponding to a single iteration, we only observe the taxi hailing events in the grid cells which had vehicles scheduled to them. 

\looseness -1 We put a Gaussian process prior on the unknown rate function of a Poisson process, yielding a Cox Process with Poisson Process likelihood. The likelihood of observing a realization $\mathcal{D} = \lbrace x_n \rbrace_{n=1}^N$ over the domain $\mathcal{X}$ for a Poisson Process with rate function $\lambda(.)$ is $p(\mathcal{D} \mathop{} | \mathop{} \lambda(.)) = \exp(-\int_{\mathcal{X}}{\lambda(x)\mathop{}\!\mathrm{d}x})\prod_{n}{\lambda(x_n)}$. This Poisson process specification is used in the construction of a Cox process model, which is $p(\mathcal{D},\lambda(.),\Theta) = p\left(\mathcal{D}\mathop{}|\mathop{}\lambda(.)\right) \cdot p(\lambda(.)\mathop{}|\mathop{}\Theta) \cdot p(\Theta)$, with $\lambda(.)$ being a Gaussian Process conditioned on being positive-valued over the domain. We adopt the inference scheme along with the approximation scheme to maintain positivity of the rate function from \citet{Mutny2021}. The samples from the posterior are obtained via Langevin dynamics.

In our experiment, we compare TS for Cox Process Sensing from \citet{Mutny2021a} with our DPP-TS approach, leveraging our diversifying process to improve city coverage by our scheduled taxi fleets. As DPP kernel, we use the mutual information kernel that is obtained when the posterior for the rate function is approximated with a Gaussian distribution, known as the Laplace Approximation.

In Figure \ref{fig:experimentcomp}.f we depict allocation of 5 taxis to city blocks and report the simple regret. DPP-TS reliably achieves lower simple regret than standard Thompson Sampling sensing with resampling.

\section{CONCLUSIONS}
In this work we introduced DPP-BBO, a natural and easily applicable framework for enhancing batch diversity in BBO algorithms which works in more settings than previous diversification strategies: it is directly applicable to the continuous domain case, when due to approximation and non-standard models we are unable to compute hallucinations or confidence intervals (as in the Cox process example), or more generally when used in combination with any randomized BBO sampling scheme or arbitrary diversity kernel. Moreover, for DPP-TS we show improved theoretical guarantees and strong practical performance on simple regret.

\subsubsection*{Acknowledgements}
This research was supported by the ETH AI Center and
the SNSF grant 407540 167212 through the NRP 75
Big Data program. This publication was created as part of NCCR Catalysis (grant number 180544), a National Centre of Competence in Research funded by the Swiss National Science Foundation.

\bibliographystyle{apalike}
\bibliography{references}

%
%

\onecolumn \makesupplementtitle

\begin{appendices}
\section{MARKOV CHAIN MONTE CARLO SAMPLING FOR DPP-BBO}\label{mcmcappendix}
Our approach for sampling from $P_{\text{DPP-TS } t}$ leverages a Markov Chain Monte Carlo (MCMC) sampler. We construct an ergodic Markov Chain over batches from $\Omega = \{ X | \; X \subset \mathcal{X}, |X| = k \}$ with transition kernel $T(X^{\prime} | X)$ such that the Detailed Balance equation
\begin{equation}\label{detbalance}
Q(X)T(X^{\prime} | X) = Q(X^{\prime})T(X | X^{\prime})
\end{equation}
is satisfied almost surely with respect to $P_{\text{DPP-TS } t}$, with $Q(X) = P_{\max t}(X)\det(L_X)$ being the unnormalized potential of $P_{\text{DPP-TS } t}$.

Running the Markov chain will at the limit produce a limiting distribution $\pi(X)$ independent of the initial distribution $\pi_0(X)$. If the above mentioned property of Detailed Balance is satisfied, $\pi(X)$ will be equivalent to the true distribution $P_{\text{DPP-TS } t}$, meaning we can use the Markov Chain to approximately sample from $P_{\text{DPP-TS } t}$ provided we run it long enough.

If $Q(X)$ were tractable, we could use the standard Metropolis-Hastings algorithm \citep{hastings_monte_1970}: at each step sampling a candidate batch $X^{\prime}$ from a proposal distribution $R(X^{\prime}|X)$, then accepting the candidate with probability $\alpha = \min \left\lbrace 1, \frac{Q(X^{\prime})R(X|X^{\prime})}{Q(X)R(X^{\prime}|X)} \right\rbrace$.
\begin{algorithm}[H]\label{metropolisalgo}
\caption{Metropolis-Hastings MCMC}
\begin{algorithmic}
\State sample initial $X$ at random
\Repeat
	\State sample candidate $X^{\prime} \sim R(X^{\prime}|X)$
	\State accept with probability $\alpha = \min \left\lbrace 1, \frac{Q(X^{\prime})R(X|X^{\prime})}{Q(X)R(X^{\prime}|X)} \right\rbrace$
	\If {accepted}
		\State $X$ = $X^{\prime}$
	\EndIf
\Until converged
\end{algorithmic}
\end{algorithm}

\begin{theorem}[Metropolis-Hastings \citep{hastings_monte_1970}]\label{metropolistheorem}
The Markov Chain obtained from the Metropolis-Hastings Algorithm satisfies the Detailed Balance equation $Q(X)T(X^{\prime} | X) = Q(X^{\prime})T(X | X^{\prime})$ over the support of the proposal distribution $R(X^{\prime}|X)$.
\end{theorem}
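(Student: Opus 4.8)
The plan is to verify the detailed balance equation~(\ref{detbalance}) directly from the explicit form of the transition kernel induced by Algorithm~\ref{metropolisalgo}. First I would write down $T(X'|X)$ for a genuine move $X' \neq X$: such a transition requires both \emph{proposing} $X'$, with probability $R(X'|X)$, and \emph{accepting} it, with probability $\alpha(X,X') = \min\{1, \frac{Q(X')R(X|X')}{Q(X)R(X'|X)}\}$, so that $T(X'|X) = R(X'|X)\,\alpha(X,X')$. The diagonal case $X' = X$ can be dismissed immediately, since detailed balance then reads $Q(X)T(X|X) = Q(X)T(X|X)$ and holds tautologically regardless of the self-transition (rejection) probability.

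The entire content of the argument is a single algebraic identity applied to the left-hand side of~(\ref{detbalance}). Substituting the acceptance probability gives
\[
Q(X)\,T(X'|X) = Q(X)R(X'|X)\min\left\{1, \frac{Q(X')R(X|X')}{Q(X)R(X'|X)}\right\}.
\]
Using $c\cdot\min\{1, d/c\} = \min\{c,d\}$ for $c,d>0$, the right-hand side collapses to $\min\{Q(X)R(X'|X),\, Q(X')R(X|X')\}$. The key observation is that this quantity is manifestly symmetric under the swap $X \leftrightarrow X'$. Carrying out the identical computation starting instead from $Q(X')T(X|X')$ produces the same symmetric expression, so the two sides of~(\ref{detbalance}) coincide, which is exactly detailed balance.

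The one place requiring care --- and the reason the statement is qualified ``over the support of the proposal distribution'' --- is the well-definedness of the acceptance ratio. The collapse $c\cdot\min\{1,d/c\}=\min\{c,d\}$ presupposes $c = Q(X)R(X'|X) > 0$, so I would restrict attention to pairs $(X,X')$ with $R(X'|X)>0$ and $Q(X)>0$ (and symmetrically for the reverse move), which is precisely the support condition in the hypothesis; there the ratio is finite and the identity applies verbatim. For pairs outside this support one checks separately that both sides of~(\ref{detbalance}) vanish, so nothing is lost. I expect no genuine obstacle: once the transition kernel is written out, the proof reduces entirely to the symmetry of $\min\{Q(X)R(X'|X),\, Q(X')R(X|X')\}$, and the only subtlety is bookkeeping around the degenerate zero-probability cases.
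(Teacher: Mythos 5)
Your proposal is correct and follows essentially the same route as the paper's proof: both write the off-diagonal transition kernel as proposal times acceptance probability and verify detailed balance directly, with the diagonal case dismissed as trivial. The only cosmetic difference is that you collapse the paper's explicit case split (ratio $<1$ versus $\geq 1$) into the symmetric identity $Q(X)T(X'|X)=\min\{Q(X)R(X'|X),\,Q(X')R(X|X')\}$, which is a slightly tidier way of doing the same computation.
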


\begin{proof}
We assume that $R(X^{\prime}|X) > 0 \; \forall X,X^\prime$, and we analyze the two cases for the Detailed Balance equation.
\begin{itemize}
\item Case $X=X^\prime$: The equivalence is trivial for any $T(X|X)$.
\item Case $X \neq X^\prime$:

We can express the transition kernel as $T(X^{\prime} | X) = \alpha R(X^{\prime}|X)$. Assume that for the transition $X \rightarrow X^\prime$ we have $\frac{Q(X^{\prime})R(X|X^{\prime})}{Q(X)R(X^{\prime}|X)} < 1$, and therefore $T(X^{\prime} | X) = \frac{Q(X^{\prime})R(X|X^{\prime})}{Q(X)R(X^{\prime}|X)} R(X^{\prime}|X) = \frac{Q(X^{\prime})R(X|X^{\prime})}{Q(X)}$.
Then, for the inverse transition $X^\prime \rightarrow X$ we necessarily have $\frac{Q(X^{\prime})R(X|X^{\prime})}{Q(X)R(X^{\prime}|X)} \geq 1$ and $T(X | X^\prime) = R(X|X^{\prime})$.

The resulting Detailed Balance equation is
\begin{equation}
Q(X)\frac{Q(X^{\prime})R(X|X^{\prime})}{Q(X)} = Q(X^\prime)R(X|X^{\prime})
\end{equation}
and we have equality.
\end{itemize}
\end{proof}

If the proposal distribution has the same support of the true distribution, Metropolis-Hastings allows us to approximately sample from it.

\subsection{Metropolis-Hastings with $p_{\max}$ proposals}

The problem with naively using Metropolis-Hasting MCMC sampling is that our $Q(X) = P_{\max t}(X)\det(L_X)$ contains $P_{\max}(X) = \prod_{x_b \in X} p_{\max}(x_b)$, which is intractable and cannot be computed on the fly. As previously stated, the only thing we can easily do is sample from it by sampling $\tilde{f}$ and then maximizing it. In order to obtain a suitable MCMC sampler, we need to subtly alter existing samplers.

We first propose an MCMC algorithm which samples whole batches at every step:
\begin{algorithm}[H]
\caption{Full batch MCMC sampler}
\begin{algorithmic}
\State pick random initial batch $X$
\Repeat
	\State sample candidate batch $X^{\prime} \sim P_{\max}(X^{\prime})$
	\State accept with probability $\alpha = \min \left\lbrace 1, \frac{\det(L_{X^{\prime}})}{\det(L_X)} \right
\rbrace$
	\If {accepted}
		\State $X$ = $X^{\prime}$
	\EndIf
\Until converged
\end{algorithmic}
\end{algorithm}

This algorithm is equivalent to Metropolis-Hastings: if in MH we chose $R(X^{\prime}|X) = P_{\max t}(X^{\prime})$, the fraction in the definition of the acceptance probability $\alpha$ would in fact reduce to
\begin{equation}
\frac{Q(X^{\prime})R(X|X^{\prime})}{Q(X)R(X^{\prime}|X)} = \frac{P_{\max}(X^{\prime})\det(L_{X^{\prime}})P_{\max}(X)}{P_{\max}(X)\det(L_X)P_{\max}(X^{\prime})} = \frac{\det(L_{X^{\prime}})}{\det(L_X)} \text{.}
\end{equation}
By virtue of this equivalence, Theorem \ref{metropolistheorem} applies to our procedure as well and our sampler approximately samples from the true $P_{\text{DPP-TS}}$ distribution.

In a similar fashion, it's possible to define a more efficient MCMC sampler which only changes one point from the batch at every step. Since we're in the k-DPP setting, it's possible for us to consider the distribution over a batch $X$ as a k-dimensional multivariate distribution over the $x_b \in X$. Then, the obtained sampler can be seen as akin to a Gibbs sampler, and is the one showed in the main paper as Algorithm \ref{algomcmc}.

This again reduces to Metropolis-Hastings, with proposal
\begin{equation}
R(X^{\prime}|X) =
\begin{cases}
0 & \text{if } \exists i,j : x_i^{\prime} \neq x_i \wedge x_j^{\prime} \neq x_j \\
\frac{1}{k}p_{\max}(x_i^{\prime}) & \text{if } !\exists i : x_i^{\prime} \neq x_i\\
\sum_{x_i^{\prime} \in X^\prime}{\frac{1}{k}p_{\max}(x_i^{\prime})} & \text{if } X = X^\prime
\end{cases}
\end{equation}

When sampling a proposal point with $X \neq X^\prime$, the fraction in the definition of the acceptance probability $\alpha$ then becomes
\begin{align}
\frac{Q(X^{\prime})R(X|X^{\prime})}{Q(X)R(X^{\prime}|X)} & =
\frac{\left(\prod_{x_j^{\prime} \in X^{\prime}} p_{\max}(x_j^{\prime}) \right)\det(L_{X^{\prime}})\frac{1}{k}p_{\max}(x_i)}{\left(\prod_{x_j \in X} p_{\max}(x_j) \right)\det(L_X)\frac{1}{k}p_{\max}(x_i^{\prime})}\\
& =
\frac{\left(\prod_{x_j^{\prime} \in X^{\prime} \setminus \{x_i^{\prime}\}} p_{\max}(x_j^{\prime}) \right)\det(L_{X^{\prime}})}{\left(\prod_{x_j \in X \setminus \{x_i\}} p_{\max}(x_j) \right)\det(L_X)} =
\frac{\det(L_{X^{\prime}})}{\det(L_X)}
\end{align}
the last simplification being allowed because $X \setminus \{x_i\} = X^{\prime} \setminus \{x_i^{\prime}\}$.

The only difference from the MH formulation of Theorem \ref{metropolistheorem} is that the support of $R(X^{\prime}|X)$ is not the same of $P_{\text{DPP-TS}}$, as we disallow sampling of batches $X^\prime$ with more than one different element to $X$. However, since $R(X^{\prime}|X) = 0 \Leftrightarrow R(X|X^{\prime}) = 0$, we still satisfy detailed balance in all points. We can then see that $k$ transitions are sufficient to obtain any $X^\prime$ from an existing $X$ when $|X| = k$, and therefore our Markov Chain remains ergodic. With all conditions satisfited, even Algorithm \ref{algomcmc} allows us to approximately sample from the true $P_{\text{DPP-TS}}$.

Algorithm \ref{algomcmc} is a simple modification of an existing k-DPP sampler proposed by \citet{anari_monte_2016}. Furthermore, \citet{rezaei_polynomial_2019} introduces a similar MCMC algorithm for continuous domain DPPs that performs efficiently under certain conditions. Because of its simplicity and effectiveness, Algorithm \ref{algomcmc} is the one we use in all our experiments.

\subsection{Additional Gibbs samplers}

To further demonstrate the simplicity of converting existing k-DPP samplers to $P_{\text{DPP-TS}}$ samplers, we modify \citet{li_fast_2016}'s  algorithm to sample from our $P_{\text{DPP-TS}}$.

\begin{algorithm}[H]
\caption{Modified Gibbs sampler from \citet{li_fast_2016}}
\begin{algorithmic}
\State pick random initial batch $X$
\Repeat
	\State Sample $b$ from uniform Bernoulli distribution
	\If {$b = 1$}
		\State uniformly pick point $x_i \in X$ to replace
		\State sample candidate point $x_i^{\prime} \sim p_{\max}(x_i^{\prime})$
		\State define $X^{\prime} = X \setminus \{x_i\} \cup \{x_i^{\prime}\}$
		\State accept with probability $\alpha = \frac{\det(L_{X^{\prime}})}{\det(L_{X^{\prime}}) + \det(L_{X})}$
		\If {accepted}
			\State $X$ = $X^{\prime}$
		\EndIf
	\EndIf
\Until converged
\end{algorithmic}
\end{algorithm}

Repeating the same steps used for the other Single-point proposal MCMC sampler, we can check that this also satisfies detailed balance. Overall, the procedure is very similar to our preferred Algorithm \ref{algomcmc}.

\section{MUTUAL INFORMATION AND EXPERIMENTAL DESIGN}\label{infobackground}

Modern theoretical analyses of Bayesian Optimization algorithms such as that of \citet{srinivas_gaussian_2010} make use of Mutual Information and other information-theoretic quantities related to $f$. A comprehensive definition of such concepts is also required for our regret analysis.

The main quantity of interest in the aforementioned analysis is indeed the \textit{Mutual Information} $I(f; \mathbf{y}_{1:T})$ between $f$ and a set of observations $\mathbf{y}_{1:T}$ from points $X_{1:T} = \lbrace x_1, \ldots, x_T\rbrace$, sometimes referred to as \textit{Information Gain}. This measures the amount of information learned about the function $f$ by observing $\mathbf{y}_{1:T}$, and for a GP it can be written as
\begin{align}
I\left(f; \mathbf{y}_{1:T}\right) = H\left(\mathbf{y}_{1:T}\right) - H\left(\mathbf{y}_{1:T} \vert f\right) & = \frac{1}{2} \sum_{t=1}^T \log\left( 1 + \sigma^{-2} \sigma^2_t \left( x_t \right) \right)\\
& = \frac{1}{2} \log\det\left( I + \sigma^{-2} K_{X_{1:T}} \right)
\end{align}
where $H\left(\mathbf{y}_{1:T}\right)$ is the differential entropy of the distribution over observations $\mathbf{y}_{1:T}$, $H\left(\mathbf{y}_{1:T} \vert f\right)$ is the differential entropy of the observations conditioned on $f$, $\sigma^2_t(x_t)$ is the posterior variance over $f(x_t)$ conditioned on the partial observations $\mathbf{y}_{1:t-1}$, and $K_{X_{1:T}} = K(X_{1:T}, X_{1:T})$ is the kernel matrix for the prior GP.

The \textit{Conditional Mutual Information} between $f$ and observations $\mathbf{y}_{t:T}$ given previous observations $\mathbf{y}_{1:t-1}$ is then
\begin{align}
I\left(f; \mathbf{y}_{t:T} \vert \mathbf{y}_{1:t-1}\right) & = H\left(\mathbf{y}_{t:T} \vert \mathbf{y}_{1:t-1}\right) - H\left(\mathbf{y}_{t:T} \vert f, \mathbf{y}_{1:t-1}\right)\\
& = H\left(\mathbf{y}_{t:T} \vert \mathbf{y}_{1:t-1}\right) - H\left(\mathbf{y}_{t:T} \vert f\right)\\
& = \frac{1}{2} \sum_{t^\prime=t}^T \log\left( 1 + \sigma^{-2} \sigma^2_{t^\prime} \left( x_{t^\prime} \right) \right) = \frac{1}{2} \log\det\left( I + \sigma^{-2} {K_t}_{X_{t:T}} \right)
\end{align}
with $K_t$ corresponding to the kernel matrix for the posterior kernel $k_t$ of the GP conditioned on the observations $\mathbf{y}_{1:t-1}$.

Mutual Information satisfies the property of \textit{submodularity}, meaning that the information gain $I(f;\mathbf{y}_X \vert \mathbf{y}_{1:t})$ over $f$ of observations $\mathbf{y}_X$ incurs diminishing returns when conditioned on more and more samples $\mathbf{y}_{1:t}$. Essentially, this means that $I(f;\mathbf{y}_X \vert \mathbf{y}_{1:t}) \geq I(f;\mathbf{y}_X \vert \mathbf{y}_{1:t^\prime})$ for any $t^\prime > t$. The most information any set of observations $\mathbf{y}_X$ is able to obtain on $f$ would be at the very beginning $I(f;\mathbf{y}_X)$, not conditioned on any previous examples. Likewise, observing specific additional data will never increase the information any future samples will obtain.

Bayesian Optimization bounds often employ the \textit{Maximum Information Gain} $\gamma_T$ with respect to $f$ obtainable from any observation set $\mathbf{y}_{X}$ of size at most $T$:
\begin{equation}
\gamma_T = \max_{X \in \mathcal{X}, |X| \leq T} I(f;\mathbf{y}_X) \text{.}
\end{equation}
This quantity also bounds any conditional information gain, as by submodularity $I(f;\mathbf{y}_X \vert \mathbf{y}_{1:t}) \leq I(f;\mathbf{y}_X)$, as previously discussed.

\section{NOVEL REGRET BOUNDS PROOFS}
We recall the definition of Bayesian Batch Cumulative Regret for a generic Bayesian Optimization algorithm:
\begin{definition}[Bayes Batch Cumulative Regret]\label{bayesbatchreg}
\emph{
\begin{equation}
{\text{BBCR}_{\text{algo}}}_{T,B} = \mathbb{E}\bigg[ \sum_{t=1}^T \min_{b \in [1,B]} {r_{\text{algo}}}_{t,b} \bigg] = \mathbb{E}\bigg[ \sum_{t=1}^T \min_{b \in [1,B]} \left( f(x^{\star}) - f(x_{\text{algo}\; t,b}) \right) \bigg] \text{.}
\end{equation}
}
\end{definition}
We make use of proof techniques from \citet{contal_parallel_2013} and \citet{russo_learning_2014} to prove a bound on Bayesian Batch Cumulative Regret for TS and DPP-TS equivalent to one obtainable by sequential full-feedback (non batched) TS, and by consequence a bound on Bayesian Simple Regret.

\subsection{The BBCR Bound Proof for TS}\label{tsbbcrboundproof}
Before proving the bound for the novel DPP-TS, we do so for regular Batched Thompson Sampling.

We first recall a few statements from \citet{russo_learning_2014}, necessary to justify subsequent steps in our proof. If given the current posterior $\text{GP}(\mu_t,K_t)$, we define the Upper Confidence Bound $U_t(x) = \mu_t(x) + \sqrt{\beta_t}\sigma_t(x)$ for any $\beta_t$ exactly as in GP-UCB, we can show the following:
\begin{proposition}[\citet{russo_learning_2014}]\label{russotsucbprop}
For any $U_t$ sequence defined by some $\beta_t$ sequence
\emph{
\begin{equation}
{\text{BCR}_{\text{TS }}}_T = \mathbb{E}\bigg[ \sum_{t=1}^T \left( U_t({x_{\text{TS}}}_t) - f({x_{\text{TS}}}_t) \right) \bigg] + \mathbb{E}\bigg[ \sum_{t=1}^T \left( f(x^\star) - U_t(x^\star) \right) \bigg] \text{.}
\end{equation}
}
\end{proposition}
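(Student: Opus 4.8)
The plan is to start from the definition ${\text{BCR}_{\text{TS}}}_T = \mathbb{E}[\sum_{t=1}^T (f(x^\star) - f({x_{\text{TS}}}_t))]$ and insert the upper confidence bound by a simple add-and-subtract. Writing each instantaneous regret term as
\[
f(x^\star) - f({x_{\text{TS}}}_t) = \big(f(x^\star) - U_t(x^\star)\big) + \big(U_t(x^\star) - U_t({x_{\text{TS}}}_t)\big) + \big(U_t({x_{\text{TS}}}_t) - f({x_{\text{TS}}}_t)\big),
\]
then summing over $t$ and taking expectations splits the Bayesian cumulative regret into three sums. The first and third sums are exactly the two terms appearing in the proposition, so the whole claim reduces to showing that the middle sum $\mathbb{E}[\sum_{t=1}^T (U_t(x^\star) - U_t({x_{\text{TS}}}_t))]$ vanishes.

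To kill the middle term I would condition on the filtration $D_{t-1}$ and apply the tower rule, reducing the task to showing $\mathbb{E}[U_t(x^\star) \mid D_{t-1}] = \mathbb{E}[U_t({x_{\text{TS}}}_t) \mid D_{t-1}]$ for each $t$. Two facts make this immediate. First, $U_t(x) = \mu_t(x) + \sqrt{\beta_t}\sigma_t(x)$ is a deterministic function of $x$ once $D_{t-1}$ is fixed, since $\mu_t$, $\sigma_t$ and $\beta_t$ are all $D_{t-1}$-measurable; note this holds for \emph{any} chosen $\beta_t$ sequence, which is why the identity is $\beta_t$-agnostic. Second --- and this is the crux --- the defining property of Thompson (posterior) sampling is that, conditioned on $D_{t-1}$, the sampled action ${x_{\text{TS}}}_t = \argmax_{x} \tilde{f}(x)$ with $\tilde{f} \sim \text{Post}_t$ has exactly the same distribution as the true optimizer $x^\star = \argmax_{x} f(x)$, because $\tilde{f}$ and $f$ share the same posterior law given the history. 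Applying one fixed measurable function $U_t(\cdot)$ to two identically distributed random actions yields equal conditional expectations, so the middle term is zero term-by-term.

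The main obstacle is simply the careful justification of this posterior-matching identity: one must argue that $x^\star$ and ${x_{\text{TS}}}_t$ are conditionally identically distributed given $D_{t-1}$ (the standard Bayesian-regret observation of \citet{russo_learning_2014}), and that $U_t$ is genuinely $D_{t-1}$-measurable so it can be treated as a fixed function inside the matching-distribution step. Everything else is the elementary add-and-subtract decomposition together with linearity of expectation; no concentration or information-theoretic machinery enters here, since the statement is an exact identity rather than an inequality --- the quantitative work of bounding the two surviving terms is deferred to the subsequent steps of the regret analysis.
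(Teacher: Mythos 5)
Your proposal is correct and follows essentially the same route as the paper: an add-and-subtract of the UCB terms followed by the key observation that, conditioned on $D_{t-1}$, $x^\star$ and ${x_{\text{TS}}}_t$ are identically distributed while $U_t(\cdot)$ is $D_{t-1}$-measurable, so $\mathbb{E}[U_t(x^\star)\mid D_{t-1}] = \mathbb{E}[U_t({x_{\text{TS}}}_t)\mid D_{t-1}]$. The only cosmetic difference is that you decompose into three sums and show the middle one vanishes, whereas the paper decomposes into two sums and swaps $U_t({x_{\text{TS}}}_t)$ for $U_t(x^\star)$ in one of them --- these are the same argument.
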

This can be shown by first rewriting 
\begin{align}
\mathbb{E}\left[ \sum_{t=1}^T \left( f(x^\star) - f({x_{\text{TS}}}_t) \right) \right] = \mathbb{E}\left[ \sum_{t=1}^T \left( f(x^\star) - U_t({x_{\text{TS}}}_t) + U_t({x_{\text{TS}}}_t) - f({x_{\text{TS}}}_t) \right) \right]\\
= \mathbb{E}\left[ \sum_{t=1}^T \left( U_t({x_{\text{TS}}}_t) - f({x_{\text{TS}}}_t) \right) \right] + \mathbb{E}\left[ \sum_{t=1}^T \left( f(x^\star) - U_t({x_{\text{TS}}}_t) \right) \right]
\end{align}
and noticing that, conditioned on the history $D_{t-1}$, $x^\star$ and ${x_{\text{TS}}}_t$ are identically distributed and $U_t(.)$ is a deterministic function. Therefore $\mathbb{E}\left[ U_t(x^\star) | D_{t-1} \right] = \mathbb{E}\left[ U_t({x_{\text{TS}}}_t) | D_{t-1} \right]$, and the overall expectation over these terms maintains equality.

\citet{russo_learning_2014} the proceed to bound both components. Assuming a finite domain and using $\beta_t = 2\ln\left(\frac{B(t^2 + 1)|\mathcal{X}|}{\sqrt{2\pi}}\right)$, following them we obtain
\begin{equation}\label{russo1bound}
\mathbb{E}\bigg[ f(x^\star) - U_t(x^\star) \bigg] \leq \frac{1}{B(t^2 + 1)} ,\qquad \mathbb{E}\bigg[ \sum_{t=1}^T \left( f(x^\star) - U_t(x^\star) \right) \bigg] \leq \frac{1}{B}\sum_{t=1}^T \left( \frac{1}{t^2 + 1} \right) \leq \frac{1}{B}
\end{equation}
and
\begin{equation}\label{russo2bound}
\mathbb{E}\bigg[ U_t({x_{\text{TS}}}_t) - f({x_{\text{TS}}}_t) \bigg] =  \mathbb{E}\bigg[ \sqrt{\beta_t} \sigma_t({x_{\text{TS}}}_t) \bigg] \text{.}
\end{equation}

With this established, as a first step in our proof we modify Lemma 1 from \citet{contal_parallel_2013} and introduce
\begin{lemma}\label{tslemma1}
For finite $\mathcal{X}$ and $\beta_t = 2\ln\left(\frac{B(t^2 + 1)|\mathcal{X}|}{\sqrt{2\pi}}\right)$, we have 
\emph{
\begin{equation}
\mathbb{E}\bigg[ \min_{b \in [1,B]} {r_{\text{TS}}}_{t,b} \bigg] \leq \mathbb{E}\bigg[ {r_{\text{TS}}}_{t,1} \bigg] \leq \frac{1}{B(t^2 + 1)} + \mathbb{E}\bigg[ \sqrt{\beta_t} \sigma_{t,1}(x_{\text{TS}\; t,1}) \bigg] \text{.}
\end{equation}
}
\end{lemma}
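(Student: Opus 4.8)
The plan is to establish the two inequalities separately. The first, $\mathbb{E}\big[\min_{b\in[1,B]} {r_{\text{TS}}}_{t,b}\big] \leq \mathbb{E}\big[{r_{\text{TS}}}_{t,1}\big]$, is immediate: pointwise the minimum over the batch is bounded above by the regret of any fixed member, in particular the first, so $\min_{b} {r_{\text{TS}}}_{t,b} \leq {r_{\text{TS}}}_{t,1}$ holds surely, and the inequality survives taking expectations. All of the work therefore goes into the second inequality, which concerns only the first point $x_{\text{TS}\; t,1}$ of the batch.

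The key observation is that $x_{\text{TS}\; t,1}$ is drawn by standard Thompson Sampling against the posterior $\text{GP}(\mu_t, K_t)$ conditioned on $D_{t-1}$, so the single-round version of Proposition \ref{russotsucbprop} applies directly. I would insert $\pm U_t(x^\star)$ and $\pm U_t(x_{\text{TS}\; t,1})$ to write ${r_{\text{TS}}}_{t,1} = \big(f(x^\star) - U_t(x^\star)\big) + \big(U_t(x_{\text{TS}\; t,1}) - f(x_{\text{TS}\; t,1})\big)$, using that, conditioned on $D_{t-1}$, the maximizer $x^\star$ and the Thompson sample $x_{\text{TS}\; t,1}$ are identically distributed while $U_t$ is $D_{t-1}$-measurable; hence $\mathbb{E}[U_t(x^\star)\mid D_{t-1}] = \mathbb{E}[U_t(x_{\text{TS}\; t,1})\mid D_{t-1}]$ and the two $U_t$ cross terms coincide in expectation. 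Note that $\sigma_{t,1} = \sigma_t$ here, since no point of batch $t$ has yet been incorporated into the posterior.

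It then remains to bound the two resulting expectations using the estimates Russo and Van Roy already supply for the chosen $\beta_t$. For the first term, Equation \ref{russo1bound} gives $\mathbb{E}[f(x^\star) - U_t(x^\star)] \leq \frac{1}{B(t^2+1)}$. For the second, the Bayesian conditional independence of $f$ and the sampled draw given $D_{t-1}$ makes the posterior mean $\mu_t(x_{\text{TS}\; t,1})$ cancel in expectation, yielding $\mathbb{E}[U_t(x_{\text{TS}\; t,1}) - f(x_{\text{TS}\; t,1})] = \mathbb{E}[\sqrt{\beta_t}\,\sigma_{t,1}(x_{\text{TS}\; t,1})]$ exactly as in Equation \ref{russo2bound}. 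Adding the two bounds produces precisely the claimed right-hand side.

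The main (and essentially only) obstacle is the identical-distribution and measurability argument that permits exchanging the $U_t(x^\star)$ and $U_t(x_{\text{TS}\; t,1})$ terms in expectation; everything else is bookkeeping and direct citation of Equations \ref{russo1bound}--\ref{russo2bound}. The one batched-specific point worth checking is that $\beta_t$ carries the extra factor $B$ inside the logarithm, which is exactly what tightens the per-round bound on $\mathbb{E}[f(x^\star) - U_t(x^\star)]$ to $\frac{1}{B(t^2+1)}$ and ultimately yields the favorable $1/B$ scaling in Theorem \ref{tsbatchbound}.
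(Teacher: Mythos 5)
Your proposal is correct and follows essentially the same route as the paper's own proof: the trivial pointwise bound for the first inequality, then the add-and-subtract of the UCB combined with the observation that $x^\star$ and $x_{\text{TS}\;t,1}$ are identically distributed given $D_{t-1}$ while $U_{t,1}$ is $D_{t-1}$-measurable, followed by invoking Equations (\ref{russo1bound}) and (\ref{russo2bound}) for the two resulting terms. Your added remark about why the posterior mean cancels in the second term is just an unpacking of Equation (\ref{russo2bound}), which the paper cites directly.
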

\begin{proof}
\begin{align}
\mathbb{E}\bigg[ \min_{b \in [1,B]} {r_{\text{TS}}}_{t,b} \bigg] & \leq \mathbb{E}\bigg[ {r_{\text{TS}}}_{t,1} \bigg] = \mathbb{E}\bigg[ f(x^{\star}) - f(x_{\text{TS}\; t,1}) \bigg]\\
\label{tslemma1step1} & = \mathbb{E}\bigg[ f(x^{\star}) - U_{t,1}(x^{\star}) + U_{t,1}(x_{\text{TS}\; t,1}) - f(x_{\text{TS}\; t,1}) \bigg]\\
\label{tslemma1step2} & \leq \frac{1}{B(t^2 + 1)} + \mathbb{E}\bigg[ \sqrt{\beta_t} \sigma_{t,1}(x_{\text{TS}\; t,1}) \bigg] \text{.}
\end{align}
Step (\ref{tslemma1step1}) can be performed due to Russo and Van Roy's Proposition \ref{russotsucbprop}, by adding and subtracting the UCB $U_{t,1}(x_{\text{TS}\; t,1}) = \mu_{t,1}(x_{\text{TS}\; t,1}) + \sqrt{\beta_t}\sigma_{t,1}(x_{\text{TS}\; t,1})$ and noticing that, conditioned on the history $D_{t-1}$, $x^\star$ and ${x_{\text{TS}}}_{t,1}$ are identically distributed and $U_{t,1}(.)$ is a deterministic function. Therefore $\mathbb{E}\left[ U_{t,1}(x^\star) | D_{t-1,B} \right] = \mathbb{E}\left[ U_{t,1}({x_{\text{TS}}}_{t,1}) | D_{t-1,B} \right]$.

To obtain step (\ref{tslemma1step2}), we then separate the terms from Equation (\ref{tslemma1step1}) into $\mathbb{E}\left[ f(x^{\star}) - U_{t,1}(x^{\star})\right]$ and $\mathbb{E}\left[ U_{t,1}(x_{\text{TS}\; t,1}) - f(x_{\text{TS}\; t,1}) \right]$, bounding the first with Equation (\ref{russo1bound}) and the second with Equation (\ref{russo2bound}).
\end{proof}
\*~

After proving this essential Lemma, we proceed with adapting Lemma 2 from \citet{contal_parallel_2013}. Unlike them, we need not bother with guarantees about a maximizer high probability region $\mathfrak{R}^+$ as the one defined for GP-UCB-PE, as we are operating in expectation.

\begin{lemma}\label{tslemma2}
In expectation, the deviation of the first point within a batch selected by TS is bounded by the one for any point within the previous batch selected by TS, thus
\emph{
\begin{equation}
\mathbb{E}\bigg[ \sigma_{t+1,1}({x_{\text{TS }}}_{t+1,1}) \bigg] \leq \mathbb{E}\bigg[ \sigma_{t,b}({x_{\text{TS }}}_{t,b}) \bigg] \qquad \forall t \in [1,T-1], \; \forall b \in [1,B] \text{.}
\end{equation}
}
\end{lemma}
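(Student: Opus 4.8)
The plan is to prove the inequality by comparing both sides to the posterior standard deviation evaluated at the \emph{true} maximizer $x^\star$: I would rewrite each side as an expectation of the relevant $\sigma$ at $x^\star$ using the Thompson Sampling symmetry, and then compare the two deviations at $x^\star$ directly via monotonicity of the GP posterior variance. Concretely, the target reduces to the chain
$\mathbb{E}[\sigma_{t+1,1}(x_{\text{TS},t+1,1})] = \mathbb{E}[\sigma_{t+1,1}(x^\star)] \leq \mathbb{E}[\sigma_{t,b}(x^\star)] = \mathbb{E}[\sigma_{t,b}(x_{\text{TS},t,b})]$.

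First I would record two ingredients. (i) \emph{TS symmetry}: conditioned on a history $\mathcal{F}$, a Thompson sample $\arg\max \tilde f$ with $\tilde f \sim \mathrm{Post}(\mathcal{F})$ is distributed identically to $x^\star = \arg\max f$, because $f \mid \mathcal{F}$ and $\tilde f \mid \mathcal{F}$ share the same posterior law; this is exactly the observation already used for Proposition \ref{russotsucbprop} and in the proof of Lemma \ref{tslemma1}. (ii) \emph{Variance monotonicity}: for a GP the posterior variance is a deterministic, non-increasing function of the \emph{set of evaluation locations} conditioned upon, and does not depend on the observed values. Since the filtration defining $\sigma_{t+1,1}$ conditions on all $B$ locations of batch $t$ (plus everything before), while $\sigma_{t,b}$ conditions only on the first $b-1$ locations of batch $t$, the former conditions on a superset of locations, so $\sigma_{t+1,1}(x) \leq \sigma_{t,b}(x)$ for every $x$, pathwise.

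Then I would chain the three relations. For the left-hand side, conditioning on the round-$(t{+}1)$ history makes $\sigma_{t+1,1}$ deterministic while $x_{\text{TS},t+1,1}$ has the law of $x^\star$, so (i) gives $\mathbb{E}[\sigma_{t+1,1}(x_{\text{TS},t+1,1})] = \mathbb{E}[\sigma_{t+1,1}(x^\star)]$. Applying (ii) pointwise at the (random) location $x^\star$ yields $\mathbb{E}[\sigma_{t+1,1}(x^\star)] \leq \mathbb{E}[\sigma_{t,b}(x^\star)]$. For the right-hand side I would condition on the filtration $\mathcal{F}_{t,b}$ consisting of the round-$t$ history together with the locations $x_{\text{TS},t,1}, \ldots, x_{\text{TS},t,b-1}$, under which $\sigma_{t,b}$ is measurable, and argue that $x_{\text{TS},t,b}$ still has the conditional law of $x^\star$, giving $\mathbb{E}[\sigma_{t,b}(x^\star)] = \mathbb{E}[\sigma_{t,b}(x_{\text{TS},t,b})]$ and closing the chain.

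The main obstacle is precisely this last equality: the function $\sigma_{t,b}$ depends on the earlier in-batch locations, so I cannot invoke the vanilla TS symmetry at the round-$t$ start. The key is that in batched TS the in-batch samples are argmaxes of \emph{independent} posterior draws from $\mathrm{Post}(\mathcal{F}_t)$, hence the locations $x_{\text{TS},t,1}, \ldots, x_{\text{TS},t,b-1}$ are independent of $f$ given $\mathcal{F}_t$; conditioning on these locations \emph{without their values} therefore leaves the posterior law of $f$, and thus of $x^\star$, unchanged, while still being enough to fix $\sigma_{t,b}$ and to supply the variance reduction in step (ii). Establishing that conditioning on locations-without-values is uninformative about $f$ yet fully determines the variance is the crux; once it is in place, $x_{\text{TS},t,b} \stackrel{d}{=} x^\star$ under $\mathcal{F}_{t,b}$ follows from (i), and because $\sigma_{t,b}$ is $\mathcal{F}_{t,b}$-measurable the equality $\mathbb{E}[\sigma_{t,b}(x^\star)] = \mathbb{E}[\sigma_{t,b}(x_{\text{TS},t,b})]$ is immediate.
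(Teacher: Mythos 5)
Your proposal is correct and follows essentially the same route as the paper's proof: both reduce each side to $\mathbb{E}[\sigma(x^\star)]$ via the Thompson-sampling symmetry (using that the in-batch locations, drawn from independent posterior samples, carry no information about $f$ so that $x_{\text{TS},t,b}$ and $x^\star$ share the same conditional law given $D_{t,b-1}$ while $\sigma_{t,b}$ is deterministic), and then invoke the non-increasing posterior variance of the GP to compare $\sigma_{t+1,1}(x^\star)$ with $\sigma_{t,b}(x^\star)$. The only cosmetic difference is that you justify the variance comparison pointwise in $x$ before evaluating at the random $x^\star$, whereas the paper states it directly in expectation; both are valid since the GP posterior variance depends only on the conditioning locations.
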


\begin{proof}
For any time $t$, for every step $(t,b)$ within the batch, the points ${x_{\text{TS }}}_{t,b}$ for TS are independently sampled from $P_{\max\; t,1}$, which depends on history up to $D_{t-1,B}$. Therefore, given $D_{t,b-1}$, $\sigma_{t,b}(x)$ is a deterministic function, and ${x_{\text{TS }}}_{t,b}$ and the true $x^{\star}$ have the same distribution. We thus have that $\forall t \in [1,T], \; \forall b \in [1,B]$:
\begin{align} \label{tslemma2eq1}
\mathbb{E}\bigg[ \sigma_{t,b}({x_{\text{TS }}}_{t,b}) \bigg] & = \mathbb{E}\bigg[ \mathbb{E}\bigg[ \sigma_{t,b}({x_{\text{TS }}}_{t,b}) \bigg| D_{t,b-1} \bigg] \bigg]\\ & = \mathbb{E}\bigg[ \mathbb{E}\bigg[ \sigma_{t,b}(x^{\star}) \bigg| D_{t,b-1} \bigg] \bigg] = \mathbb{E}\bigg[ \sigma_{t,b}(x^{\star}) \bigg]
\end{align}
Because of the law of non-increasing variance \citep{rasmussen_gaussian_2005}, we have that
\begin{equation} \label{tslemma2eq2}
\mathbb{E}\bigg[ \sigma_{t+1,1}(x^{\star}) \bigg] \leq \mathbb{E}\bigg[ \sigma_{t,b}(x^{\star}) \bigg] \qquad \forall t \in [1,T-1], \; \forall b \in [1,B]
\end{equation}
and therefore:
\begin{equation}
\mathbb{E}\bigg[ \sigma_{t+1,1}({x_{\text{TS }}}_{t+1,1}) \bigg] \leq \mathbb{E}\bigg[ \sigma_{t,b}({x_{\text{TS }}}_{t,b}) \bigg] \qquad \forall t \in [1,T-1], \; \forall b \in [1,B] \text{.}
\end{equation}
\end{proof}
\*~

We can then introduce
\begin{lemma}\label{tslemma3}
In expectation, the sum of deviations for the first points of all batches selected by TS is bounded by the sum of deviations for all points selected by TS, divided by $B$.
\emph{
\begin{equation}
\mathbb{E}\bigg[ \sum_{t=1}^T \sigma_{t,1}({x_{\text{TS }}}_{t,1}) \bigg] \leq \mathbb{E}\bigg[ \frac{1}{B} \sum_{t=1}^T \sum_{b=1}^B \sigma_{t,b}({x_{\text{TS }}}_{t,b}) \bigg]
\end{equation}
}
\end{lemma}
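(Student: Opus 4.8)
The plan is to derive the claim from Lemma~\ref{tslemma2} by an averaging-and-telescoping argument on the first-point deviations. First I would fix a batch index $t \in [1,T-1]$ and apply Lemma~\ref{tslemma2}, which asserts $\mathbb{E}[\sigma_{t+1,1}({x_{\text{TS }}}_{t+1,1})] \leq \mathbb{E}[\sigma_{t,b}({x_{\text{TS }}}_{t,b})]$ \emph{simultaneously} for every $b \in [1,B]$. Since the left-hand side is independent of $b$, averaging these $B$ inequalities gives
\begin{equation}
\mathbb{E}\big[\sigma_{t+1,1}({x_{\text{TS }}}_{t+1,1})\big] \leq \frac{1}{B}\sum_{b=1}^B \mathbb{E}\big[\sigma_{t,b}({x_{\text{TS }}}_{t,b})\big],
\end{equation}
so the fresh first-point deviation of batch $t+1$ is dominated by the batch-averaged deviation of batch $t$.

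Next I would sum this over $t=1,\ldots,T-1$. The left-hand side telescopes to $\sum_{t=2}^{T}\mathbb{E}[\sigma_{t,1}({x_{\text{TS }}}_{t,1})]$, and the right-hand side collects the averaged deviations of batches $1$ through $T-1$. Adding back the uncharged first-batch term $\mathbb{E}[\sigma_{1,1}({x_{\text{TS }}}_{1,1})]$ to both sides, the target inequality reduces to the single boundary comparison $\mathbb{E}[\sigma_{1,1}({x_{\text{TS }}}_{1,1})] \leq \frac{1}{B}\sum_{b=1}^{B}\mathbb{E}[\sigma_{T,b}({x_{\text{TS }}}_{T,b})]$, i.e.\ to showing that the very first deviation is absorbed by the final batch's average.

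The hard part will be precisely this boundary term. By the law of non-increasing variance, conditioning on additional in-batch locations only shrinks the posterior standard deviation, so within any batch the first point carries the \emph{largest} expected deviation and the whole sequence $\mathbb{E}[\sigma_{t,b}]$ is non-increasing in the global $(t,b)$ order; consequently the naive telescoping leaves a first-batch contribution that is a priori the dominant, not the smallest, summand. I would therefore scrutinize whether the monotone structure inherited from Lemma~\ref{tslemma2} (first-point deviations non-increasing across $t$), possibly combined with the kernel bound $\sigma_{t,b}(\cdot)\leq 1$, genuinely makes this residual nonpositive rather than merely bounded — this is the step on which the advertised removal of the initialization phase of \citet{kandasamy_parallelised_2018} rests, and where I would expect the real technical content of the lemma to lie.
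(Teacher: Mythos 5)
Your decomposition is essentially the paper's own: the paper applies Lemma~\ref{tslemma2} only for $b \in [2,B]$, keeping the $b=1$ term on both sides to get $\mathbb{E}\left[ \sigma_{t,1}(x_{t,1}) + (B-1)\sigma_{t+1,1}(x_{t+1,1}) \right] \leq \mathbb{E}\left[ \sum_{b=1}^{B}\sigma_{t,b}(x_{t,b}) \right]$, and then sums over $t$; after averaging your $B$ inequalities this is the same telescoping argument, so on the mechanism you and the paper agree.

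The boundary term you isolated is exactly the right thing to worry about, and your suspicion is correct: it does \emph{not} come out nonpositive. In your formulation the required comparison $\mathbb{E}[\sigma_{1,1}(x_{1,1})] \leq \frac{1}{B}\sum_{b}\mathbb{E}[\sigma_{T,b}(x_{T,b})]$ points exactly the wrong way, since by the law of non-increasing variance (and the fact that the $x_{t,b}$ within a batch are i.i.d.\ given $D_{t-1,B}$ while $\sigma_{t,1}$ dominates $\sigma_{t,b}$ pointwise) every $\mathbb{E}[\sigma_{t,b}(x_{t,b})]$ is \emph{at most} $\mathbb{E}[\sigma_{1,1}(x_{1,1})]$; the case $T=1$ already falsifies the lemma as literally stated, because then the right-hand side is an average of terms each bounded by the left-hand side. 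The paper's proof writes ``summing both sides over $t$ \ldots we obtain the desired result'' and silently drops a residual of $\frac{B-1}{B}\left(\mathbb{E}[\sigma_{1,1}(x_{1,1})] - \mathbb{E}[\sigma_{T+1,1}(x^{\star})]\right) \geq 0$ sitting on the wrong side of the inequality, so neither your argument nor the paper's closes the lemma exactly as stated. The honest conclusion of the telescoping is $\mathbb{E}\left[\sum_{t=1}^{T}\sigma_{t,1}(x_{t,1})\right] \leq \frac{1}{B}\,\mathbb{E}\left[\sum_{t=1}^{T}\sum_{b=1}^{B}\sigma_{t,b}(x_{t,b})\right] + \frac{B-1}{B}\,\mathbb{E}[\sigma_{1,1}(x_{1,1})]$; since the kernel is bounded by $1$ the correction is at most $1$ and feeds into Theorem~\ref{tsbatchbound} only as an additive $O(\sqrt{\beta_T})$ term, so the asymptotics and the removal of the initialization phase survive, but the constants in the stated bound do not, and the lemma should carry the extra additive term. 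You were right not to wave this step through.
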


\begin{proof}
For all $t$, using Lemma \ref{tslemma2} and summing over $b$, we can get
\begin{equation}
\mathbb{E}\bigg[ \sigma_{t,1}({x_{\text{TS }}}_{t,1}) + (B-1)\sigma_{t+1,1}({x_{\text{TS }}}_{t+1,1}) \bigg] \leq \mathbb{E}\left[ \sigma_{t,1}({x_{\text{TS }}}_{t,1}) + \sum_{b=2}^B \sigma_{t,b}({x_{\text{TS }}}_{t,b}) \right]
\end{equation}
Summing both sides over $t$ and dividing by $B$, we obtain the desired result.
\end{proof}
\*~

\begin{lemma}\label{tslemma4}
Assuming without loss of generality that, for all $t$ and $b$, $\left(\sigma_{t,b}({x_{\text{TS }}}_{t,b})\right)^2 \leq 1$, the sum of variances of the points selected by TS is bounded by a constant factor times $\gamma_{TB}$:
\emph{
\begin{equation}
\sum_{t=1}^T \sum_{b=1}^B \left(\sigma_{t,b}({x_{\text{TS }}}_{t,b})\right)^2 \leq C_2 \gamma_{TB}
\end{equation}
}
with $C_2 = 2/\log(1 + \sigma^{-2})$ and $\gamma_{TB}$ being the maximum information gain on $f$ from $TB$ observations as defined in Appendix \ref{infobackground}.
\end{lemma}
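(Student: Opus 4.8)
The plan is to bridge the left-hand sum of posterior variances to the maximum information gain $\gamma_{TB}$ through the closed-form expression for the mutual information of a Gaussian process derived in Appendix~\ref{infobackground}. Recall that for a GP the information gain from the $TB$ selected locations, ordered sequentially as $x_{1,1},\ldots,x_{T,B}$, decomposes by the chain rule as $I(f;\mathbf{y}_{1:TB}) = \frac{1}{2}\sum_{t=1}^T\sum_{b=1}^B \log\left(1 + \sigma^{-2}\sigma_{t,b}^2(x_{t,b})\right)$, where each $\sigma_{t,b}(x_{t,b})$ is the posterior standard deviation conditioned on all previously chosen locations. Since this is exactly the target sum wrapped inside a logarithm, the strategy is to dominate each summand $\sigma_{t,b}^2(x_{t,b})$ by a constant multiple of the corresponding log term, and then collapse the resulting sum back into the information gain.

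The workhorse is an elementary inequality: for any $s \in [0,1]$ one has $s \leq \frac{\log(1+\sigma^{-2}s)}{\log(1+\sigma^{-2})}$. First I would establish this by noting that $s \mapsto \log(1+\sigma^{-2}s)$ is concave and agrees with the line $s\cdot\log(1+\sigma^{-2})$ at the two endpoints $s=0$ and $s=1$, so concavity forces $\log(1+\sigma^{-2}s) \geq s\log(1+\sigma^{-2})$ on the whole interval. This is precisely where the standing assumption $\sigma_{t,b}^2(x_{t,b}) \leq 1$ is used, since it guarantees that $s = \sigma_{t,b}^2(x_{t,b})$ lies in $[0,1]$.

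Applying the inequality termwise with $s = \sigma_{t,b}^2(x_{t,b})$ and summing over all $t$ and $b$ gives $\sum_{t,b}\sigma_{t,b}^2(x_{t,b}) \leq \frac{1}{\log(1+\sigma^{-2})}\sum_{t,b}\log\left(1+\sigma^{-2}\sigma_{t,b}^2(x_{t,b})\right) = \frac{2}{\log(1+\sigma^{-2})}\,I(f;\mathbf{y}_{1:TB})$. Finally I would invoke the definition of the maximum information gain, namely $I(f;\mathbf{y}_{1:TB}) \leq \gamma_{TB}$, which holds because the $TB$ chosen locations form a set of cardinality $TB$ and $\gamma_{TB}$ maximizes over all such sets. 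This yields the claim with $C_2 = 2/\log(1+\sigma^{-2})$.

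The main obstacle is not the algebra but justifying that the telescoping identity for the information gain applies verbatim in the batched setting, where the observations within a batch are unavailable at the moment the batch is finalized. The point to verify carefully is that the GP posterior variance update depends only on the chosen input locations and not on the observed values, so $\sigma_{t,b}(\cdot)$ conditioned on the locations $x_{1,1},\ldots,x_{t,b-1}$ is well-defined at batch-selection time, and the chain-rule decomposition of $I(f;\mathbf{y}_{1:TB})$ over the full sequential ordering of the $TB$ locations holds exactly regardless of the adaptive, batched nature of the selection.
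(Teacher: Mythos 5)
Your proof is correct and follows essentially the same route as the paper: bound each $\sigma_{t,b}^2(x_{t,b})\le \frac{\log(1+\sigma^{-2}\sigma_{t,b}^2(x_{t,b}))}{\log(1+\sigma^{-2})}$ using the boundedness of the variance, sum to recover $\frac{2}{\log(1+\sigma^{-2})}I(f;\mathbf{y}_{1:T,1:B})$, and bound the information gain by $\gamma_{TB}$. Your explicit concavity justification of the elementary inequality and the remark that the posterior variance depends only on the chosen locations (so the chain-rule decomposition survives the batched, delayed-feedback setting) are details the paper leaves implicit, but the argument is the same.
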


\begin{proof}
The information gain on $f$ from a sequence of $TB$ observations can be expressed in terms of the posterior variances
\begin{equation}
I\left( f(x_{1:T,1:B}); \mathbf{y}_{1:T,1:B}\right) = \frac{1}{2}\sum_{t=1}^T \sum_{b=1}^B \log\left(1 + \sigma^{-2}\left(\sigma_{t,b}(x_{t,b})\right)^2\right)
\end{equation}
as seen in Appendix \ref{infobackground}, and is bounded by $\gamma_{TB}$ by definition. We can then obtain, thanks to the bounded variance assumption:
\begin{align}
\sum_{t=1}^T \sum_{b=1}^B \left(\sigma_{t,b}({x_{\text{TS }}}_{t,b})\right)^2 & \leq \sum_{t=1}^T \sum_{b=1}^B \frac{1}{\log(1 + \sigma^{-2})} \log\left(1 + \sigma^{-2}\left(\sigma_{t,b}({x_{\text{TS }}}_{t,b})\right)^2\right)\\
& = \frac{2}{\log(1 + \sigma^{-2})} \, I\left( f({x_{\text{TS }}}_{1:T,1:B}); \mathbf{y}_{1:T,1:B}\right) \text{.}
\end{align}
\end{proof}
\*~

Finally, we can conclude by introducing our Bayesian Batch Cumulative Regret bound.
\begin{theorem}[Bayes Batch Cumulative Regret Bound for Batched Thompson Sampling]\label{tsbatchbound-2}
If $f \sim GP(0,K)$ with covariance kernel bounded by 1 and noise model $\mathcal{N}(0,\sigma^2)$, and either
\begin{itemize}
\item Case 1: finite $\mathcal{X}$ and $\beta_t = 2\ln\left(\frac{B(t^2 + 1)|\mathcal{X}|}{\sqrt{2\pi}}\right)$;
\item Case 2: compact and convex $\mathcal{X} \in [0,l]^d$, with Assumption \ref{gradassumpt} satisfied and $\beta_t = 4(d+1)\log(Bt) + 2d\log(dab\sqrt{\pi})$.
\end{itemize}
Then Batched Thompson Sampling attains Bayes Batch Cumulative Regret of
\emph{
\begin{equation}
{\text{BBCR}_{\text{TS}}}_{T,B} \leq \frac{C_1}{B} + \sqrt{ C_2 \frac{T}{B} \beta_T \gamma_{TB}}
\end{equation}
}
with $C_1 = 1$ for Case 1, $C_1 = \frac{\pi^2}{6} + \frac{\sqrt{2\pi}}{12}$ for Case 2, and $C_2 = \frac{2}{\log(1 + \sigma^{-2})}$.
\end{theorem}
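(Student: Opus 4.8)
The plan is to chain the four lemmas together, starting from the definition of Bayes Batch Cumulative Regret and assembling the two summands of the claimed bound. First I would sum the statement of Lemma~\ref{tslemma1} over $t \in [1,T]$, which by linearity of expectation splits the regret into two pieces:
\begin{equation}
{\text{BBCR}_{\text{TS}}}_{T,B} = \sum_{t=1}^T \mathbb{E}\bigg[\min_{b \in [1,B]} {r_{\text{TS}}}_{t,b}\bigg] \leq \sum_{t=1}^T \frac{1}{B(t^2+1)} + \sum_{t=1}^T \mathbb{E}\bigg[\sqrt{\beta_t}\,\sigma_{t,1}({x_{\text{TS}}}_{t,1})\bigg].
\end{equation}
The first sum is controlled by a convergent series and contributes exactly the $C_1/B$ term (with $C_1 = 1$ in Case~1 by Equation~\eqref{russo1bound}), so all the work is in the second sum.

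For the second sum I would first use that $\beta_t$ is nondecreasing in $t$ to replace $\sqrt{\beta_t}$ by $\sqrt{\beta_T}$ and factor it out, then invoke Lemma~\ref{tslemma3} to trade the first-point deviations for the full batched sum, obtaining $\sqrt{\beta_T}\,\mathbb{E}[\frac{1}{B}\sum_{t=1}^T\sum_{b=1}^B \sigma_{t,b}({x_{\text{TS}}}_{t,b})]$. Next I would apply Cauchy--Schwarz pathwise across the $TB$ deviation terms, $\sum_{t,b}\sigma_{t,b} \leq \sqrt{TB\sum_{t,b}\sigma_{t,b}^2}$, and bound the squared sum by $C_2\gamma_{TB}$ via Lemma~\ref{tslemma4}. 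Collapsing the $\frac{1}{B}\sqrt{TB} = \sqrt{T/B}$ prefactor with $\sqrt{\beta_T}$ and $\sqrt{C_2\gamma_{TB}}$ yields $\sqrt{C_2\frac{T}{B}\beta_T\gamma_{TB}}$, the second summand. The interchange of these inequalities with the outer expectation is clean precisely because Lemma~\ref{tslemma4} is a \emph{pathwise} bound: the information gain $\frac{1}{2}\log\det(I+\sigma^{-2}K_X)$ depends only on the query locations, not on the random realization of $f$, so $\sum_{t,b}\sigma_{t,b}^2 \leq C_2\gamma_{TB}$ holds deterministically and can be pulled out of $\mathbb{E}[\cdot]$ after Cauchy--Schwarz.

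The main obstacle is not this chaining, which is mechanical once Lemmas~\ref{tslemma1}--\ref{tslemma4} are in hand, but rather Case~2, where $\mathcal{X}$ is continuous and Lemma~\ref{tslemma1} as stated (which relies on a finite $|\mathcal{X}|$) no longer applies. There I would reprove the analogue of Lemma~\ref{tslemma1} through the standard discretization argument of \citet{srinivas_gaussian_2010}: using Assumption~\ref{gradassumpt} to control the fluctuation of $f$ on a grid of resolution chosen fine enough (scaling with $t$) that the discretization error remains summable in $t$. This substitutes the $|\mathcal{X}|$-dependent confidence level by the stated $\beta_t = 4(d+1)\log(Bt) + 2d\log(dab\sqrt{\pi})$ and produces the larger constant $C_1 = \frac{\pi^2}{6} + \frac{\sqrt{2\pi}}{12}$, while the deterministic-variance arguments of Lemmas~\ref{tslemma2}--\ref{tslemma4} carry over verbatim, so the final bound takes the same form in both cases.
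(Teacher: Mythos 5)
Your proposal is correct and follows essentially the same route as the paper's proof: summing Lemma~\ref{tslemma1} over $t$, bounding the confidence-level series by $C_1/B$, factoring out $\sqrt{\beta_T}$, applying Lemma~\ref{tslemma3}, Cauchy--Schwarz, and Lemma~\ref{tslemma4}, with Case~2 handled by substituting the continuous-domain discretization inequalities into Lemma~\ref{tslemma1}. Your added observation that Lemma~\ref{tslemma4} holds pathwise (so the bound passes through the outer expectation) is a correct and useful clarification that the paper leaves implicit.
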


\begin{proof}
Using the previous lemmas together with Russo and Van Roy inequalities, we can show for Case 1:
\begin{align}
{\text{BBCR}_{\text{TS}}}_{T,B} & = \mathbb{E}\bigg[ \sum_{t=1}^T \min_{b \in [1,B]} {r_{\text{TS}}}_{t,b} \bigg] \leq \mathbb{E}\bigg[ \sum_{t=1}^T {r_{\text{TS}}}_{t,1} \bigg]\\
& \leq \sum_{t=1}^T \frac{1}{B(t^2 + 1)} + \mathbb{E}\bigg[ \sum_{t=1}^T \sqrt{\beta_t} \sigma_{t,1}({x_{\text{TS}}}_{t,1}) \bigg] \quad \text{by Lemma \ref{tslemma1}}\\
& \leq \frac{C_1}{B} + \mathbb{E}\bigg[ \sqrt{\beta_T} \frac{1}{B} \sum_{t=1}^T \sum_{b=1}^B \sigma_{t,b}(x_{t,b}) \bigg]\quad \text{by Eq. (\ref{russo1bound}) and Lemma \ref{tslemma3}}\\
& \leq \frac{C_1}{B} + \mathbb{E}\bigg[ \sqrt{\beta_T} \frac{1}{B} \sqrt{ TB \sum_{t=1}^T \sum_{b=1}^B (\sigma_{t,b}(x_{t,b}))^2} \bigg] \quad \text{by Cauchy-Schwartz}\\
& \leq \frac{C_1}{B} + \sqrt{ C_2 \frac{T}{B} \beta_T \gamma_{TB}}\quad \text{by Lemma \ref{tslemma4}}
\end{align}
For Case 2, we simply modify the steps of Lemma \ref{tslemma1} with the corresponding inequalities used by \citet{kandasamy_parallelised_2018} for their continuous-domain version of the bound.
\end{proof}
\*~

The bound we just derived scales equivalently to the bound obtainable by standard sequential TS with full feedback. This is in contrast to the bound previously obtained by \citet{kandasamy_parallelised_2018} for Batched TS without initialization:
\begin{equation}
{\text{BBCR}_{\text{TS}}}_{T,B} \leq \frac{C_1}{B} + \sqrt{C_2 \frac{T}{B} \exp(C) \beta_T \gamma_{TB}} 
\end{equation}
which depends on an additional factor $\exp(C)$ dependent on $B$, rendering the bound not convergent in $B$ unless a wasteful initialization procedure is performed before TS.

\subsection{The BBCR Bound Proof for DPP-TS}\label{dpptsbbcrboundproof}
We now consider DPP-TS again, and strive to obtain an equivalent bound.

In order to do so, we must modify the algorithm we developed and introduce DPP-TS-alt, so that for every batch:
\begin{itemize}
\item For the first sample in the batch $x_{\text{DPP-TS-alt}\; t,1}$, we sample from $p_{\max t}$ as in standard Thompson Sampling;
\item For all the other samples $x_{\text{DPP-TS-alt}\; t,b}$ with $b \in [2,B]$, we sample from joint $P_{\text{DPP-TS }t}$, using the most updated posterior variance matrix $K_{t,1}$ to define the DPP kernel.
\end{itemize}

We begin by noting that Lemma \ref{tslemma1} is applicable to DPP-TS-alt as well, as $x_{\text{DPP-TS-alt}\; t,1}$ behaves exactly in the same way as $x_{\text{TS}\; t,1}$, since DPP-TS-alt has been explicitly defined as using standard Thompson Sampling for the first sampled point of every batch.

Then, we must translate Lemma \ref{tslemma2} to DPP-TS-alt as well, which requires a more involved proof. In fact, we must split the undertaking in three preliminary lemmas.

First, when considering the batch sampled at time $t$, we introduce for sake of argument an \textit{alternative point} $\tilde{x}_{t,B}$ to take the place of the last sampled point of the batch $x_{\text{DPP-TS-alt}\; t,B}$. The original point $x_{\text{DPP-TS-alt}\; t,B}$ is sampled from the DPP, and is distributed as $x_{\text{DPP-TS-alt}\; t,B} \sim P_{\text{DPP-TS }t}(x_{t,B} | x_{t,1}, \ldots, x_{t,B-1}) = p_{\text{DPP-TS}\; t,B}(x_{t,B})$ when conditioned on the previous points of the batch. Instead, we define the replacement as $\tilde{x}_{t,B} \sim p_{\max\; t,1}$, sampled from standard Thompson Sampling with the posterior available from observations up to $(t,1)$.
\begin{lemma}\label{dpptslemma2.1}
At time $t$, let $x_{\text{DPP-TS-alt}\; t,B}$ be the last point of the batch chosen by DPP-TS-alt. Let us in its place define an alternative element $\tilde{x}_{t,B}$ obtained by following the DPP-TS-alt procedure up to step $(t,B-1)$ and then sampling using regular TS from the available maximizer posterior distribution $p_{\max\; t,1}$ instead of from conditioned $p_{\text{DPP-TS}\; t,B}$.

We can then show that 
\emph{
\begin{equation}
\mathbb{E}\bigg[ \sigma_{t+1,1}({x_{\text{DPP-TS-alt }}}_{t+1,1}) \bigg] \leq \mathbb{E}\bigg[ \sigma_{t,B}(\tilde{x}_{t,B}) \bigg] \text{.}
\end{equation}
}
\end{lemma}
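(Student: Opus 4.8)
The plan is to mimic the structure of the proof of Lemma \ref{tslemma2}, but with the added twist that the replacement point $\tilde{x}_{t,B}$ is sampled from $p_{\max\; t,1}$, the \emph{same} maximizer distribution that also governs $x^{\star}$ conditioned on $D_{t-1,B}$. The target inequality chains the posterior deviation at step $(t+1,1)$ to the deviation of the alternative point at step $(t,B)$, so I would bridge the two through the deviation at $x^{\star}$, exactly as in the original lemma.

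First I would observe that $\tilde{x}_{t,B}$ and $x^{\star}$ are identically distributed when conditioned on the history $D_{t,1}$ (i.e.\ observations up to $(t,1)$): both are draws from $p_{\max\; t,1}$, since $\tilde{x}_{t,B}$ is by construction sampled from $p_{\max\; t,1}$, and $x^{\star}$ is the argmax of the true $f$, whose posterior given $D_{t,1}$ is precisely the distribution from which Thompson Sampling draws. Since $\sigma_{t,B}(\cdot)$ is a deterministic function (the posterior variance depends on the history of evaluation \emph{locations} $D_{t,B-1}$, not on the observed values, for a GP), the tower property gives
\begin{equation}
\mathbb{E}\bigg[ \sigma_{t,B}(\tilde{x}_{t,B}) \bigg] = \mathbb{E}\bigg[ \mathbb{E}\big[ \sigma_{t,B}(\tilde{x}_{t,B}) \,\big|\, D_{t,1} \big] \bigg] = \mathbb{E}\bigg[ \mathbb{E}\big[ \sigma_{t,B}(x^{\star}) \,\big|\, D_{t,1} \big] \bigg] = \mathbb{E}\bigg[ \sigma_{t,B}(x^{\star}) \bigg],
\end{equation}
mirroring step (\ref{tslemma2eq1}). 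The subtlety here, and \textbf{the step I expect to be the main obstacle}, is justifying that conditioning $\tilde{x}_{t,B}$ on $D_{t,1}$ rather than on the full batch prefix $D_{t,B-1}$ is legitimate: $\tilde{x}_{t,B}$ is drawn from $p_{\max\; t,1}$ which only depends on $D_{t,1}$, so its marginal is unaffected by the intervening DPP-sampled points $x_{t,2},\dots,x_{t,B-1}$, whereas the evaluation of $\sigma_{t,B}$ does depend on those locations — I must be careful to keep these two dependencies separate and ensure the identical-distribution argument applies to the \emph{sampling} of the point while the variance function legitimately sees the full location history.

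Finally I would invoke the law of non-increasing posterior variance \citep{rasmussen_gaussian_2005}, exactly as in (\ref{tslemma2eq2}): since the evaluation locations accumulate monotonically, observing the full batch at time $t$ can only shrink the posterior variance, so $\mathbb{E}[\sigma_{t+1,1}(x^{\star})] \leq \mathbb{E}[\sigma_{t,B}(x^{\star})]$. Combining this with the identity above and the same argument that $x_{\text{DPP-TS-alt}\; t+1,1}$ is identically distributed to $x^{\star}$ given $D_{t,B}$ (as the first point of every batch is sampled by vanilla TS, already noted for Lemma \ref{tslemma1}), I obtain
\begin{equation}
\mathbb{E}\bigg[ \sigma_{t+1,1}({x_{\text{DPP-TS-alt }}}_{t+1,1}) \bigg] = \mathbb{E}\bigg[ \sigma_{t+1,1}(x^{\star}) \bigg] \leq \mathbb{E}\bigg[ \sigma_{t,B}(x^{\star}) \bigg] = \mathbb{E}\bigg[ \sigma_{t,B}(\tilde{x}_{t,B}) \bigg],
\end{equation}
which is the claimed bound. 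The essential insight is that replacing the last DPP-drawn point by a fresh TS draw restores the identical-distribution-to-$x^{\star}$ property that the DPP repulsion would otherwise break, letting the monotone-variance machinery go through unchanged.
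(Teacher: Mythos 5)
Your proof is correct and follows essentially the same route as the paper's: bridge through $x^{\star}$ using the fact that a fresh TS draw is identically distributed to $x^{\star}$ given the available history, then apply the law of non-increasing posterior variance. The only cosmetic difference is that the paper conditions on $D_{t,B-1}$ (under which $\sigma_{t,B}$ is deterministic, so no separate independence argument is needed), whereas you condition on $D_{t,1}$ and handle the resulting independence subtlety explicitly — both are valid.
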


\begin{proof}
Equations (\ref{tslemma2eq1}) and (\ref{tslemma2eq2}) are still valid for $\tilde{x}_{t,B}$ (being sampled from TS), as conditioned on history $D_{t,B-1}$, $\tilde{x}_{t,B}$ has the same distribution of $x^\star$, and so we obtain that
\begin{align} \label{lemma2dppeq1}
\mathbb{E}\bigg[ \sigma_{t+1,1}({x_{\text{DPP-TS-alt }}}_{t+1,1}) \bigg] & = \mathbb{E}\bigg[ \sigma_{t+1,1}(x^{\star}) \bigg]\\
& \leq \mathbb{E}\bigg[ \sigma_{t,B}(x^{\star}) \bigg] = \mathbb{E}\bigg[ \mathbb{E}\bigg[ \sigma_{t,B}(\tilde{x}_{t,B}) \bigg| D_{t,B-1} \bigg] \bigg]
\end{align}
\end{proof}

\begin{lemma}\label{dpptslemma2.2}
Given $\tilde{x}_{t,B}$ defined as in Lemma \ref{dpptslemma2.1}, we have that
\emph{
\begin{equation}
\mathbb{E}\bigg[ \sigma_{t,B}(\tilde{x}_{t,B}) \bigg] \leq \mathbb{E}\bigg[ \sigma_{t,B}({x_{\text{DPP-TS-alt }}}_{t,B}) \bigg] \text{.}
\end{equation}
}
\end{lemma}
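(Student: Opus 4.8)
The plan is to condition on the partial batch history $D_{t,B-1}$ and reduce the statement to a correlation inequality. Conditioned on $D_{t,B-1}$ --- equivalently, on the locations $x_{t,1},\dots,x_{t,B-1}$, since for a GP the posterior variance depends only on the evaluation locations and not on the observed values --- the map $x \mapsto \sigma_{t,B}(x)$ is a fixed deterministic function. Both candidate points $\tilde{x}_{t,B}$ and $x_{\text{DPP-TS-alt}\, t,B}$ are drawn under this same conditioning, so it suffices to prove the inequality for the conditional expectations $\mathbb{E}[\,\cdot \mid D_{t,B-1}]$ and then take an outer expectation over $D_{t,B-1}$.

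First I would make the conditional law of the DPP point explicit. By definition $\tilde{x}_{t,B}\sim p_{\max\, t,1}$, whereas $x_{\text{DPP-TS-alt}\, t,B}$ is the last coordinate of the joint $P_{\text{DPP-TS}\, t}$, whose conditional density given the first $B-1$ points is
\[
p_{\text{DPP-TS}\, t,B}(x) \;\propto\; p_{\max\, t,1}(x)\,\det\!\big(L_{X_{1:B-1}\cup\{x\}}\big),
\qquad L = I + \sigma^{-2}K_{t,1},
\]
because the factor $\prod_{b<B}p_{\max\, t,1}(x_{t,b})$ is constant in $x$. Applying the Schur-complement (rank-one) determinant identity to the block $I + \sigma^{-2}(K_{t,1})_{X_{1:B-1}}$ and using the noise-regularized GP posterior-variance formula gives the key reweighting
\[
\frac{\det\!\big(L_{X_{1:B-1}\cup\{x\}}\big)}{\det\!\big(L_{X_{1:B-1}}\big)} \;=\; 1 + \sigma^{-2}\sigma_{t,B}^2(x),
\]
the same identity that underlies the mutual-information reading of the kernel in Appendix \ref{infobackground}. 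Writing $w(x) := 1 + \sigma^{-2}\sigma_{t,B}^2(x)\ge 1$ and noting that $\det(L_{X_{1:B-1}})$ is constant in $x$, we obtain $p_{\text{DPP-TS}\, t,B}(x) = p_{\max\, t,1}(x)\,w(x)/Z$ with $Z = \mathbb{E}_{p_{\max\, t,1}}[w]$.

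With this in hand the claim becomes a statement about a single reweighting. Writing $g(x):=\sigma_{t,B}(x)$, the conditional form of the lemma reads
\[
\mathbb{E}_{p_{\max\, t,1}}[g] \;\le\; \mathbb{E}_{p_{\text{DPP-TS}\, t,B}}[g] \;=\; \frac{\mathbb{E}_{p_{\max\, t,1}}[g\,w]}{\mathbb{E}_{p_{\max\, t,1}}[w]},
\]
which, after clearing the positive denominator, is exactly $\mathbb{E}_{p_{\max}}[g\,w] \ge \mathbb{E}_{p_{\max}}[g]\,\mathbb{E}_{p_{\max}}[w]$, i.e.\ $\mathrm{Cov}_{p_{\max\, t,1}}(g,w)\ge 0$. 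Since $w = 1 + \sigma^{-2}g^2$ is a nondecreasing function of $g$ on the nonnegative range of $g$, the random variables $g$ and $w$ are comonotone functions of the single scalar $\sigma_{t,B}(x)$, so Chebyshev's association (sum) inequality yields the nonnegativity of the covariance. Taking the outer expectation over $D_{t,B-1}$ then completes the proof.

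The main obstacle is the reweighting identity: one must verify that conditioning the joint $P_{\text{DPP-TS}\, t}$ on the first $B-1$ points produces precisely $p_{\max}$ tilted by $1+\sigma^{-2}\sigma_{t,B}^2(\cdot)$, which requires the Schur-complement computation to correctly identify the Schur complement of $I+\sigma^{-2}K_{t,1}$ restricted to $X_{1:B-1}\cup\{x\}$ with the posterior variance $\sigma_{t,B}^2(x)$ obtained by conditioning $K_{t,1}$ on $x_{t,1},\dots,x_{t,B-1}$. Once that identity is established, the remaining correlation argument is elementary, as everything reduces to two comonotone functions of the same quantity $\sigma_{t,B}(x)$.
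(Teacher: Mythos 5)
Your proof is correct and follows essentially the same route as the paper's: you derive the same tilting identity $p_{\text{DPP-TS}\,t,B}(x) \propto p_{\max\,t,1}(x)\,(1+\sigma^{-2}\sigma_{t,B}^2(x))$ and then establish that the reweighting shifts mass toward high-variance points, which the paper does by writing the tilt as $1+\delta(x)$ and splitting the sum at the sign of $\delta$ — an explicit form of the same Chebyshev association inequality you invoke. No substantive difference.
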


\begin{proof}
To prove the lemma, we first observe that (from Appendix \ref{infobackground})
\begin{align}
    \det\left( I + \sigma^{-2} {K_t}_{X_{1:B}} \right) = \prod_{b=1}^B \left( 1 + \sigma^{-2} \sigma^2_{b} \left( x_{b} \right) \right) \text{.}
\end{align}
We then obtain the marginal distribution of the last point of a DPP-TS batch by summing over the domain:
\begin{align}
    p_{\text{DPP-TS }t,B}(x_{t,B}) & = \sum_{(x_{t,1},\ldots,x_{t,B-1}) \in \mathcal{X}^{B-1}} P_{\text{DPP-TS }t}\left( x_{t,1},\ldots,x_{t,B-1},x_{t,B} \right)\\
    & \propto \sum_{(x_{t,1},\ldots,x_{t,B-1}) \in \mathcal{X}^{B-1}} \left( \left(\prod_{b=1,\ldots,B} p_{\max t,1}(x_{t,b}) \right) \det\left( I + \sigma^{-2} {K_t}_{X_{1:B}} \right) \right)\\
    & \propto \sum_{(x_{t,1},\ldots,x_{t,B-1}) \in \mathcal{X}^{B-1}} \left( \prod_{b=1,\ldots,B} p_{\max t,1}(x_{t,b}) \left( 1 + \sigma^{-2} \sigma^2_{b} \left( x_{t,b} \right) \right)\right)\\
    & = p_{\max t,1}(x_{t,B})\\
    &\qquad \left( 1 + \sigma^{-2} \sigma^2_{B} \left( x_{t,B} \right) \right) \sum_{(x_{t,1},\ldots,x_{t,B-1}) \in \mathcal{X}^{B-1}} \left( \prod_{b=1,\ldots,B-1} p_{\max t,1}(x_{t,b}) \left( 1 + \sigma^{-2} \sigma^2_{b} \left( x_{t,b} \right) \right) \right)\\
    & \propto p_{\max t,1}(x_{t,B}) \left( 1 + \sigma^{-2} \sigma^2_{B} \left( x_{t,B} \right) \right) \text{.}
\end{align}
We can then show that
\begin{align}
p_{\text{DPP-TS }t,B}(x) & = \frac{1 + \sigma^{-2}\sigma_{t,B}^2(x)}{\sum_{x^{\prime} \in \mathcal{X}} p_{\max t,1}(x^{\prime}) (1 + \sigma^{-2}\sigma_{t,B}^2(x^{\prime}))} p_{\max t,1}(x)\\
& = \frac{1 + \sigma^{-2}\sigma_{t,B}^2(x)}{\mathbb{E}_{p_{\max t,1}}\left[ 1 + \sigma^{-2}\sigma_{t,B}^2(x^{\prime}) \right]} p_{\max t,1}(x)\\
& = (1 + \delta(x)) p_{\max t,1}(x)
\end{align}
with $\delta(x) = \frac{\sigma^{-2}\sigma_{t,B}^2(x) - \mathbb{E}_{P_{\max t,1}}\left[ \sigma^{-2}\sigma_{t,B}^2(x^{\prime}) \right]}{\mathbb{E}_{P_{\max t,1}}\left[ 1 + \sigma^{-2}\sigma_{t,B}^2(x^{\prime}) \right]}$.

As both $p_{\max t,1}$ and $ p_{\text{DPP-TS }t,B}$ are distributions, we have that
\begin{align}
\sum_{x \in \mathcal{X}} p_{\max t,1}(x) & = 1\\
\sum_{x \in \mathcal{X}} p_{\text{DPP-TS }t,B}(x)  &= \sum_{x \in \mathcal{X}} (1 + \delta(x)) p_{\max t,1}(x) = 1
\end{align}
and therefore
\begin{align}
\sum_{x \in \mathcal{X}} \delta(x) p_{\max t,1}(x) = 0
\end{align}

We can rewrite $\mathbb{E}_{p_{\text{DPP-TS }t,B}}\left[ \sigma_{t,B}(x) \right]$ as
\begin{align}
\mathbb{E}_{p_{\text{DPP-TS }t,B}}\bigg[ \sigma_{t,B}(x) \bigg] & = \sum_{x \in \mathcal{X}} (1 + \delta(x)) p_{\max t,1}(x) \sigma_{t,B}(x)\\
\label{dpptslemma2.2step}& = \mathbb{E}_{p_{\max t,1}}\bigg[ \sigma_{t,B}(x) \bigg] + \sum_{x \in \mathcal{X}} \delta(x) p_{\max t,1}(x) \sigma_{t,B}(x)
\end{align}
and knowing that by definition of $\delta(x)$
\begin{align}
\delta(x) \geq 0 \quad \Leftrightarrow & \quad \sigma^{-2}\sigma_{t,B}^2(x) \geq \mathbb{E}_{p_{\max t,1}}\left[ \sigma^{-2}\sigma_{t,B}^2(x^{\prime}) \right]\\
& \quad \sigma_{t,B}(x) \geq \sqrt{\mathbb{E}_{p_{\max t,1}}\left[ \sigma_{t,B}^2(x^{\prime}) \right]}
\end{align}
we can see that
\begin{equation}
\begin{split}
&\sum_{x \in \mathcal{X}} \delta(x) p_{\max t,1}(x) \sigma_{t,B}(x)\\
& = \sum\limits_{\substack{x \in \mathcal{X}\\ \delta(x) \geq 0}} \delta(x) p_{\max t,1}(x) \sigma_{t,B}(x) + \sum\limits_{\substack{x \in \mathcal{X}\\ \delta(x) < 0}} \delta(x) p_{\max t,1}(x) \sigma_{t,B}(x)\\
&\geq \left(\sum_{x \in \mathcal{X}} \delta(x) p_{\max t,1}(x)\right) \sqrt{\mathbb{E}_{p_{\max t,1}}\left[ \sigma_{t,B}^2(x^{\prime}) \right]} = 0 \text{.}
\end{split}
\end{equation}
We combine this with Equation (\ref{dpptslemma2.2step}) to prove that
\begin{equation} \label{lemma2dppeq2}
\mathbb{E}\bigg[ \mathbb{E}\bigg[ \sigma_{t,B}(\tilde{x}_{t,B}) \bigg| D_{t,B-1} \bigg] \bigg] \leq \mathbb{E}\bigg[ \mathbb{E}\bigg[ \sigma_{t,B}({x_{\text{DPP-TS-alt }}}_{t,B}) \bigg| D_{t,B-1} \bigg] \bigg] \text{,}
\end{equation}
hence $p_{\text{DPP-TS }t,B}$, the conditional distribution of the last point of the batch, is a reweigthing of $p_{\max t,1}$ in which more probability mass is put on points with higher posterior variance $\sigma_{t,B}^2(x)$.
\end{proof}

\begin{lemma}\label{dpptslemma2.3}
For any timestep $t$ and for all $b \in [3,B]$, we have that
\emph{
\begin{equation}
\mathbb{E}\bigg[ \sigma_{t,b}({x_{\text{DPP-TS-alt }}}_{t,b}) \bigg] \leq \mathbb{E}\bigg[  \sigma_{t,b-1}({x_{\text{DPP-TS-alt }}}_{t,b-1}) \bigg] \text{.}
\end{equation}
}
\end{lemma}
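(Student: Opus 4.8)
The plan is to prove the within-batch monotonicity by combining two ingredients: the permutation symmetry of the DPP-TS joint density in its later coordinates, and the pointwise law of non-increasing variance. Throughout I read $\sigma_{t,b}(\cdot)$ as the fantasized posterior standard deviation conditioned on $D_{t-1}$ together with the already-selected batch locations $x_{t,1}, \ldots, x_{t,b-1}$, which (as in Appendix \ref{infobackground}) depends only on those locations and not on the unobserved rewards, and which is exactly the quantity whose determinantal product gives $\det(I + \sigma^{-2}{K_{t,1}}_X) = \prod_b (1 + \sigma^{-2}\sigma_b^2(x_b))$.

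First I would fix $b \in [3,B]$ and condition on the partial batch $\mathcal{F} = (x_{t,1}, \ldots, x_{t,b-2})$. Relative to this conditioning, $\sigma_{t,b-1}$ is a deterministic function, whereas $\sigma_{t,b}$ conditions additionally on $x_{t,b-1}$. By the law of non-increasing variance \citep{rasmussen_gaussian_2005}, enlarging the conditioning set by the location $x_{t,b-1}$ can only shrink the posterior deviation, so $\sigma_{t,b}(x) \leq \sigma_{t,b-1}(x)$ holds pointwise for every $x \in \mathcal{X}$. Evaluating at the random point $x_{t,b}$ and taking conditional expectation gives $\mathbb{E}[\sigma_{t,b}(x_{t,b}) \mid \mathcal{F}] \leq \mathbb{E}[\sigma_{t,b-1}(x_{t,b}) \mid \mathcal{F}]$.

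The second step is the exchangeability argument. Because the DPP-TS density $\prod_j p_{\max t,1}(x_j)\,\det(I + \sigma^{-2}{K_{t,1}}_X)$ is invariant under permutations of its arguments, and since position $1$ (pure Thompson Sampling) is the only distinguished coordinate, the coordinates $x_{t,2}, \ldots, x_{t,B}$ are exchangeable given $x_{t,1}$. In particular, conditioned on $\mathcal{F}$ (which fixes coordinates $1$ through $b-2$), the two remaining compared coordinates $x_{t,b-1}$ and $x_{t,b}$ share the same marginal law after integrating out $x_{t,b+1}, \ldots, x_{t,B}$. This is precisely where the restriction $b \geq 3$ enters: for $b=2$ one of the two points would be the non-exchangeable TS point $x_{t,1}$. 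Since $\sigma_{t,b-1}$ is $\mathcal{F}$-measurable, exchangeability yields $\mathbb{E}[\sigma_{t,b-1}(x_{t,b}) \mid \mathcal{F}] = \mathbb{E}[\sigma_{t,b-1}(x_{t,b-1}) \mid \mathcal{F}]$. Chaining the two displays and taking the outer expectation over $\mathcal{F}$ gives $\mathbb{E}[\sigma_{t,b}(x_{t,b})] \leq \mathbb{E}[\sigma_{t,b-1}(x_{t,b-1})]$, as claimed.

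The main obstacle I anticipate is making the exchangeability rigorous at the level of ordered coordinates, since the DPP produces an unordered set while $\sigma_{t,b}$ is defined relative to an ordering. I would handle this either by assigning the sampled batch to positions $2, \ldots, B$ in a uniformly random order, which makes those coordinates genuinely exchangeable while leaving the order-independent information-gain sum used in Lemma \ref{tslemma4} untouched, or by arguing directly from the symmetry of the unnormalized density as in the marginalization computation of Lemma \ref{dpptslemma2.2}. A secondary point to verify is that the determinantal factorization is consistent with interpreting $\sigma_{t,b}$ as the fantasized deviation, so that the pointwise monotonicity step applies to exactly the quantity entering the DPP kernel.
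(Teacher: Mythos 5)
Your proposal is correct and follows essentially the same route as the paper's proof: condition on the history up to step $(t,b-2)$, use exchangeability of the conditional DPP-TS joint law in the remaining coordinates to equate $\mathbb{E}[\sigma_{t,b-1}(x_{t,b})]$ with $\mathbb{E}[\sigma_{t,b-1}(x_{t,b-1})]$, and combine with the law of non-increasing variance. Your additional remarks on why $b\geq 3$ is needed and on the ordered-versus-unordered subtlety are consistent with (and slightly more explicit than) the paper's argument.
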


\begin{proof}
Given any timestep $t$ and any $b \in [3,B]$, we first condition on history $D_{t,b-2}$, and take $P_{\text{DPP-TS }t,b-1}$ as the conditional joint distribution for selecting the remaining points of the batch with $b^\prime \in [b-1,B]$ for DPP-TS-alt. Because of the conditioning, the distribution is defined on deterministic quantities only.

We can see that $P_{\text{DPP-TS }t,b-1}(X) \propto P_{\max t,1}(X) \det({L_{t,b-1}}_X)$ (defined following a similar argument as that at the beginning of the proof for Lemma \ref{dpptslemma2.2}), rewritten as a joint distribution $P_{\text{DPP-TS }t,b-1}(x_{b-1}, \ldots, x_B)$, has the property of exchangeability with respect to points of the batch $X = (x_{b-1}, \ldots, x_B)$, meaning that changing the order of the points within the batch will not affect the probability $P_{\text{DPP-TS }t,b-1}(X)$. This is true as $P_{\max t,1}(X) = \prod_{b^\prime =b-1}^B p_{\max t,1}(x_b^\prime)$ depends on each $x_b^\prime$ independently, and the DPP term $\det({L_{t,b-1}}_X)$ is clearly exchangeable for any valid kernel $L_{t,b-1}$, regardless of regularization by $I$.

Therefore, any two points ${x_{\text{DPP-TS-alt }}}_{t,b}$ and ${x_{\text{DPP-TS-alt }}}_{t,b-1}$ within the batch have the same marginal distribution. This is true because of exchangeability, as integrating all other points must lead to the same result regardless of the position of the marginalized point within the batch. Furthermore, $\sigma_{t,b-1}$ is a deterministic function given $D_{t,b-2}$, and so:
\begin{equation}\label{dpptslemma2.3step1}
\mathbb{E}\bigg[ \sigma_{t,b-1}({x_{\text{DPP-TS-alt }}}_{t,b}) \bigg| D_{t,b-2} \bigg] = \mathbb{E}\bigg[ \sigma_{t,b-1}({x_{\text{DPP-TS-alt }}}_{t,b-1}) \bigg| D_{t,b-2} \bigg] \text{.}
\end{equation}

By the law of non-decreasing variance \citep{rasmussen_gaussian_2005}, we also have that
\begin{equation}\label{dpptslemma2.3step2}
\mathbb{E}\bigg[ \sigma_{t,b}({x_{\text{DPP-TS-alt }}}_{t,b}) \bigg| D_{t,b-2} \bigg] \leq \mathbb{E}\bigg[ \sigma_{t,b-1}({x_{\text{DPP-TS-alt }}}_{t,b}) \bigg| D_{t,b-2} \bigg] \text{.}
\end{equation}

Combining Equations (\ref{dpptslemma2.3step1}) and (\ref{dpptslemma2.3step2}), we finally obtain
\begin{equation}
\mathbb{E}\bigg[ \sigma_{t,b}({x_{\text{DPP-TS-alt }}}_{t,b}) \bigg| D_{t,b-2} \bigg] \leq \mathbb{E}\bigg[ \sigma_{t,b-1}({x_{\text{DPP-TS-alt }}}_{t,b-1}) \bigg| D_{t,b-2} \bigg] \text{,}
\end{equation}
and taking the overall expectation over both sides yields the lemma.
\end{proof}

Finally, we can now obtain the promised lemma:
\begin{lemma}\label{dpptslemma2final}
In expectation, when using DPP-TS-alt, the deviation of the first point of a batch, selected by standard TS, is bounded by the one for any point within the previous batch, selected by DPP-TS, thus
\emph{
\begin{equation}
\mathbb{E}\bigg[ \sigma_{t+1,1}({x_{\text{DPP-TS-alt }}}_{t+1,1}) \bigg] \leq \mathbb{E}\bigg[ \sigma_{t,b}({x_{\text{DPP-TS-alt }}}_{t,b}) \bigg] \quad \forall t \in [1,T-1], \; \forall b \in [2,B] \text{.}
\end{equation}
}
\end{lemma}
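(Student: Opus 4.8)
The plan is to prove this lemma purely by composing the three preliminary Lemmas \ref{dpptslemma2.1}, \ref{dpptslemma2.2}, and \ref{dpptslemma2.3}, since all the analytic work has already been carried out in them; what remains is a telescoping chain of inequalities. The structure mirrors the role played by Lemma \ref{tslemma2} in the standard TS bound, except that the DPP sampling of the final batch point forces us to route the argument through the auxiliary point $\tilde{x}_{t,B}$ introduced in Lemma \ref{dpptslemma2.1}.

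First I would settle the boundary case $b = B$. Lemma \ref{dpptslemma2.1} bounds the expected deviation at the first point of batch $t+1$ by the expected deviation at the auxiliary point $\tilde{x}_{t,B}$ (sampled by plain TS from $p_{\max\,t,1}$), and Lemma \ref{dpptslemma2.2} then bounds that auxiliary quantity by the deviation at the genuine DPP-sampled point ${x_{\text{DPP-TS-alt }}}_{t,B}$. Composing the two gives
\begin{equation}
\mathbb{E}\bigg[ \sigma_{t+1,1}({x_{\text{DPP-TS-alt }}}_{t+1,1}) \bigg] \leq \mathbb{E}\bigg[ \sigma_{t,B}(\tilde{x}_{t,B}) \bigg] \leq \mathbb{E}\bigg[ \sigma_{t,B}({x_{\text{DPP-TS-alt }}}_{t,B}) \bigg] ,
\end{equation}
which is precisely the claim at $b = B$.

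To reach an arbitrary $b \in [2,B]$, I would then invoke Lemma \ref{dpptslemma2.3} repeatedly. Since that lemma gives $\mathbb{E}[\sigma_{t,b}] \leq \mathbb{E}[\sigma_{t,b-1}]$ for every $b \in [3,B]$, the in-batch expected deviations are monotonically non-increasing in the batch index, so in particular $\mathbb{E}[\sigma_{t,B}({x_{\text{DPP-TS-alt }}}_{t,B})] \leq \mathbb{E}[\sigma_{t,b}({x_{\text{DPP-TS-alt }}}_{t,b})]$ for all $b \in [2,B]$. Chaining this with the $b=B$ bound established above yields the stated inequality for every $b \in [2,B]$ at once.

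Because the lemma is an assembly step, I expect no genuine obstacle here; the real difficulty lives upstream, in Lemma \ref{dpptslemma2.2} (showing the DPP reweighting shifts mass toward high-variance points and hence can only increase the expected deviation) and in the exchangeability argument of Lemma \ref{dpptslemma2.3}. The only point requiring care in the present proof is bookkeeping of which posterior the deviation $\sigma_{t,b}$ is conditioned on, and verifying that the auxiliary point $\tilde{x}_{t,B}$ is introduced and then discarded consistently, so that the final inequality relates only to genuine DPP-TS-alt samples.
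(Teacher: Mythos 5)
Your proposal is correct and matches the paper's proof, which consists of the single line ``Combine Lemmas \ref{dpptslemma2.1}, \ref{dpptslemma2.2} and \ref{dpptslemma2.3} to obtain the inequality''; you simply make the chaining explicit (Lemmas \ref{dpptslemma2.1} and \ref{dpptslemma2.2} give the $b=B$ case via the auxiliary point $\tilde{x}_{t,B}$, and the monotonicity from Lemma \ref{dpptslemma2.3} extends it to all $b \in [2,B]$).
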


\begin{proof}
Combine Lemmas \ref{dpptslemma2.1}, \ref{dpptslemma2.2} and \ref{dpptslemma2.3} to obtain the inequality.
\end{proof}
\*~

Lemma \ref{tslemma3} from the TS proof follows from lemma \ref{dpptslemma2final}, and lemma \ref{tslemma4} applies to the DPP-TS-alt as well without any further adjustments. It's then a matter of combining the new results to get the final bound.

\begin{theorem}[Bayes Batch Cumulative Regret Bound for DPP-TS]\label{dpptsbatchbound-2}
If $f \sim GP(0,K)$ with covariance kernel bounded by 1 and noise model $\mathcal{N}(0,\sigma^2)$, and either
\begin{itemize}
\item Case 1: finite $\mathcal{X}$ and $\beta_t = 2\ln\left(\frac{B(t^2 + 1)|\mathcal{X}|}{\sqrt{2\pi}}\right)$;
\item Case 2: compact and convex $\mathcal{X} \in [0,l]^d$, with Assumption \ref{gradassumpt} satisfied and $\beta_t = 4(d+1)\log(Bt) + 2d\log(dab\sqrt{\pi})$.
\end{itemize}
Then DPP-TS (in its DPP-TS-alt variant) attains Bayes Batch Cumulative Regret of
\emph{
\begin{equation}
{\text{BBCR}_{\text{DPP-TS}}}_{T,B} \leq \frac{C_1}{B} + \sqrt{ C_2 \frac{T}{B} \beta_T \gamma_{TB}} - C_3
\end{equation}
}
with $C_1 = 1$ for Case 1, $C_1 = \frac{\pi^2}{6} + \frac{\sqrt{2\pi}}{12}$ for Case 2, $C_2 = \frac{2}{\log(1 + \sigma^{-2})}$ and $-C_3 \leq 0$.
\end{theorem}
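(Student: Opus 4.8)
The plan is to replicate the chain of inequalities from the proof of Theorem~\ref{tsbatchbound-2}, substituting the DPP-TS-alt counterparts of the three structural lemmas and then reading off where the extra savings $-C_3$ appear. First I would invoke Lemma~\ref{tslemma1}, which applies verbatim to DPP-TS-alt because its first batch point is drawn by ordinary Thompson Sampling; this bounds $\mathbb{E}\left[\min_b r_{t,b}\right] \le \mathbb{E}\left[r_{t,1}\right]$ by $\frac{1}{B(t^2+1)} + \mathbb{E}\left[\sqrt{\beta_t}\,\sigma_{t,1}(x_{\text{DPP-TS-alt}\,t,1})\right]$. Summing the first term over $t$ yields the $C_1/B$ contribution exactly as before, so the whole question reduces to bounding $\mathbb{E}\left[\sum_t \sqrt{\beta_t}\,\sigma_{t,1}\right]$.

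The key step is the DPP-TS-alt analogue of Lemma~\ref{tslemma3}, which I would derive from Lemma~\ref{dpptslemma2final} while \emph{retaining} the slack that Lemma~\ref{dpptslemma2.2} discards. Concretely, define the per-round gap $\Delta_t := \mathbb{E}\left[\sigma_{t,B}(x_{\text{DPP-TS-alt}\,t,B})\right] - \mathbb{E}\left[\sigma_{t,B}(\tilde{x}_{t,B})\right] \ge 0$, which is nonnegative precisely because Lemma~\ref{dpptslemma2.2} shows the DPP reweighting places more mass on high-variance points than the plain-TS reference $\tilde{x}_{t,B}$. Chaining Lemmas~\ref{dpptslemma2.1} and~\ref{dpptslemma2.2} gives the sharpened relation $\mathbb{E}\left[\sigma_{t+1,1}\right] \le \mathbb{E}\left[\sigma_{t,B}(x_{\text{DPP-TS-alt}\,t,B})\right] - \Delta_t$, whereas Lemma~\ref{dpptslemma2final} supplies $\mathbb{E}\left[\sigma_{t+1,1}\right] \le \mathbb{E}\left[\sigma_{t,b}\right]$ for the remaining slots $b \in [2,B-1]$. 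Summing these $B-1$ inequalities (using the sharpened one for the $b=B$ slot), adding $\mathbb{E}\left[\sigma_{t,1}\right]$, and then summing over $t$ and dividing by $B$ as in Lemma~\ref{tslemma3} produces $\mathbb{E}\left[\sum_t \sigma_{t,1}\right] \le \frac{1}{B}\bigl(\mathbb{E}\left[\sum_{t,b}\sigma_{t,b}\right] - \sum_t \Delta_t\bigr)$, i.e.\ the TS bound minus a nonnegative correction.

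Finally I would close exactly as in Theorem~\ref{tsbatchbound-2}: pull out $\sqrt{\beta_T} \ge \sqrt{\beta_t}$, then apply Cauchy--Schwartz and Lemma~\ref{tslemma4} to the positive term $\frac{\sqrt{\beta_T}}{B}\mathbb{E}\left[\sum_{t,b}\sigma_{t,b}\right]$ to recover $\sqrt{C_2 \frac{T}{B}\beta_T \gamma_{TB}}$, while leaving the subtracted term linear. This yields the stated bound with $C_3 := \frac{\sqrt{\beta_T}}{B}\mathbb{E}\left[\sum_t \Delta_t\right] \ge 0$, hence $-C_3 \le 0$; Case~2 is handled by swapping the finite-domain steps of Lemma~\ref{tslemma1} for the continuous-domain versions of \citet{kandasamy_parallelised_2018}, exactly as in the TS proof. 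I expect the main obstacle to be bookkeeping rather than ideas: one must verify that $\Delta_t$ survives the Cauchy--Schwartz step as a clean additive subtraction (it does, since Cauchy--Schwartz is applied only to the positive summand and subtracting a nonnegative quantity preserves the inequality), and that the telescoping in the Lemma~\ref{tslemma3} analogue does not silently absorb $\Delta_t$ into boundary terms.
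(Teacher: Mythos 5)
Your proposal follows essentially the same route as the paper: Lemma \ref{tslemma1} applied to the TS-sampled first batch point, the chain of Lemmas \ref{dpptslemma2.1}--\ref{dpptslemma2final} replacing Lemma \ref{tslemma2}, the Lemma \ref{tslemma3} analogue, and the same Cauchy--Schwarz/Lemma \ref{tslemma4} finish, with $-C_3$ arising from the slack in Lemma \ref{dpptslemma2.2}. If anything, your explicit tracking of $\Delta_t$ through the summation and the Cauchy--Schwarz step (applied only to the positive summand) is more careful than the paper, which merely remarks at the end that the leftover per-iteration terms $-\mathbb{E}\left[\delta(x)\sigma_{t,B}(x)\right]$ from Equation (\ref{dpptslemma2.2step}) yield the negative factor $-C_3$.
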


\begin{proof}
Using the previous lemmas together with Russo and Van Roy inequalities, we can show for Case 1:
\begin{align}
{\text{BBCR}_{\text{DPP-TS}}}_{T,B} & = \mathbb{E}\bigg[ \sum_{t=1}^T \min_{b \in [1,B]} {r_{\text{DPP-TS}}}_{t,b} \bigg] \leq \mathbb{E}\bigg[ \sum_{t=1}^T {r_{\text{DPP-TS}}}_{t,1} \bigg]\\
& \leq \sum_{t=1}^T \frac{1}{B(t^2 + 1)} + \mathbb{E}\bigg[ \sum_{t=1}^T \sqrt{\beta_t} \sigma_{t,1}({x_{\text{DPP-TS}}}_{t,1}) \bigg] \quad \text{by Lemma \ref{tslemma1}}\\
& \leq \frac{C_1}{B} + \mathbb{E}\bigg[ \sqrt{\beta_T} \frac{1}{B} \sum_{t=1}^T \sum_{b=1}^B \sigma_{t,b}(x_{t,b}) \bigg]\quad \text{by Eq. (\ref{russo1bound}) and Lemma \ref{tslemma3}}\\
& \leq \frac{C_1}{B} + \mathbb{E}\bigg[ \sqrt{\beta_T} \frac{1}{B} \sqrt{ TB \sum_{t=1}^T \sum_{b=1}^B (\sigma_{t,b}(x_{t,b}))^2} \bigg] \quad \text{by Cauchy-Schwartz}\\
& \leq \frac{C_1}{B} + \sqrt{ C_2 \frac{T}{B} \beta_T \gamma_{TB}}\quad \text{by Lemma \ref{tslemma4}}
\end{align}
For Case 2, we again simply modify the steps of Lemma \ref{tslemma1} with the corresponding inequalities used by \citet{kandasamy_parallelised_2018} for their continuous-domain version of the bound.
\end{proof}

We have shown that the Bayesian bound for DPP-TS is at least as good as that of standard Batched TS. In fact, the bound is even better than for that for TS, as we can add negative factors $-{C_3}_t = -\mathbb{E}\left[ \delta(x) \sigma_{t,B}(x) \right]$ at every iteration $t$, leftover from Equation \ref{dpptslemma2.2step} in lemma \ref{dpptslemma2.2}. From this we obtain the negative factor $-C_3$.

\section{DISCUSSION OF KATHURIA ET AL.\ (\citeyear{kathuria_batched_2016})'S BOUND}\label{appendix:kathuriabound}
Despite their meaningful insight and better practical performance compared to \citet{contal_parallel_2013}'s original GP-UCB-PE, \citet{kathuria_batched_2016}'s proposed regret bound for UCB-DPP-SAMPLE does not improve on existing bounds, as it is founded on bounding the expected information gain from the last $k = B-1$ points of every batch:
\begin{align}
& \mathbb{E}_{S \sim k\text{-DPP}(L_{t,1})}\left[ \log \det ((L_{t,1})_S) \right]\\
& = \sum_{|S|=k}\frac{\det((L_{t,1})_S) \log(\det ((L_{t,1})_S))}{\sum_{|S|=k}\det((L_{t,1})_S)}\\
& = \sum_{|S|=k}\frac{\det((L_{t,1})_S) \log(\frac{\det ((L_{t,1})_S)}{\sum_{|S|=k}\det((L_{t,1})_S)})}{\sum_{|S|=k}\det((L_{t,1})_S)} + \sum_{|S|=k}\frac{\det((L_{t,1})_S) \log(\sum_{|S|=k}\det ((L_{t,1})_S))}{\sum_{|S|=k}\det((L_{t,1})_S)}\\
& = -H(k\text{-DPP}(L_{t,1})) + \log\left(\sum_{|S|=k}\det ((L_{t,1})_S)\right)\\
& \leq -H(k\text{-DPP}(L_{t,1})) + \log\left(|\mathcal{X}|^k \max\det ((L_{t,1})_S) \right)\\
& \leq -H(k\text{-DPP}(L_{t,1})) + k\log\left(|\mathcal{X}|\right) + \log\left(\max\det ((L_{t,1})_S)\right)
\end{align}
When summing this over all iterations $T$, the last term is bounded by some $C^\prime \gamma_{TB}$ with $C^\prime \geq 1$, while the first two terms summed together are trivially positive, as $H(k\text{-DPP}(L_{t,1})) \leq  k\log\left(|\mathcal{X}|\right)$. We then have that $\sum_{t=1}^T{\left(-H(k\text{-DPP}(L_{t,1})) + k\log\left(|\mathcal{X}|\right) \right) + C^\prime \gamma_{TB}} \geq \gamma_{TB}$, therefore we gain nothing as opposed to just bounding with the maximum information gain $\gamma_{TB}$. For this reason, the bound is always looser than the original GP-UCB-PE bound.

\section{ADDITIONAL EXPERIMENTS}\label{appendix:addexp}

\begin{figure}[H]
    \includegraphics[width=\textwidth]{{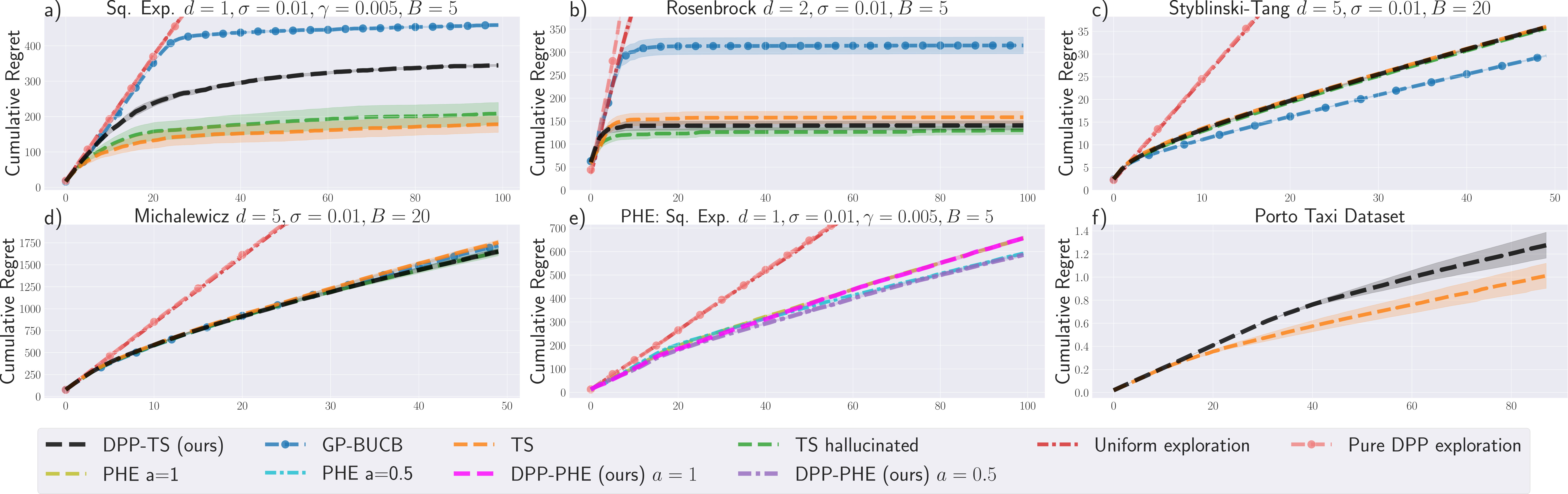}}
    \caption{Comprehensive experimental comparisons between DPP-TS and classic BBO techniques for Cumulative Regret, featuring the same experimental settings as those shown in Figure \ref{fig:experimentcomp}. DPP-TS is no longer the best performing algorithm according to this metric, as the DPP component favors additional and potentially suboptimal exploration through batch diversification. However, even when it is not better, DPP-TS still quickly converges to asymptotic behavior identical to that of TS. In Appendix \ref{lambdaexperiments} we illustrate a method for overcoming DPP-TS's limitations on Cumulative Regret.}
    \vspace{-0.3cm}
    \label{fig:experimentcomp-cumulative}
\end{figure}

\subsection{Cumulative Regret}

When performing the experiments from Section \ref{expsection} we mainly track Simple Regret, our target metric of choice on which we prove our theoretical bounds. Optimizing for Simple Regret corresponds to searching for good maximizers and heavily favors exploration, therefore we heavily benefit from DPP-BBO's batch diversification properties. However, we still wish to track the classic Cumulative Regret performance of our algorithms, in order to gain insight into whether DPP-BBO can be used to also optimize for such a metric.

Overall, when compared to classic TS and hallucinated TS on Cumulative Regret as seen in Figure \ref{fig:experimentcomp-cumulative}, DPP-TS is no longer the best performing algorithm, often over-exploring at the beginning, but still quickly converging to sublinearity. In other cases, its performance is virtually identical. For those situations in which the DPP component causes excessive exploration, we propose a solution (in Appendix \ref{lambdaexperiments}) involving limiting the use of DPP-TS to an \textit{initialization phase}.

\subsection{$\lambda$-parametrized DPP kernel} \label{lambdaexperiments}

In order to explicitly control the degree of exploration induced by the DPP reweighting of $P_{\max}$, we can parametrize our sampling distribution $P_{\text{DPP-TS}}$ with a $\lambda$ exploration parameter.

Using $\lambda \in [0,\infty]$, we would like a parametrization such that:
\begin{itemize}
\item For $\lambda = 1$ we recover the original formulation from Definition \ref{dpptsdef}: $P_{\text{DPP-TS}}(X) \propto P_{\max t}(X) \det(I + \sigma^{-2}{K_t}_X)$;
\item For $\lambda = 0$ we obtain regular Thompson Sampling $P_{\max}$;
\end{itemize}

A proposal for such a $P_{\text{DPP-TS}}$ parametrization is to use a multiplicative $\lambda$, such as
\begin{equation} \label{multlambda}
P_{\text{DPP-TS}}(X) \propto P_{\max t}(X) \det(I + \lambda\sigma^{-2}{K_t}_X)
\end{equation}

In Figure \ref{fig:experiment-lambdamult} we illustrate an experiment comparing TS and parametrized DPP-TS with different values for $\lambda$. It is straightforward to observe that by interpolating between TS and DPP-TS we observe a tradeoff in Simple Regret against Cumulative Regret performance. Smaller values of $\lambda$ correspond to slower convergence in Simple Regret but overall lower Cumulative Regret.

\begin{figure}[H]
    \includegraphics[width=\textwidth]{{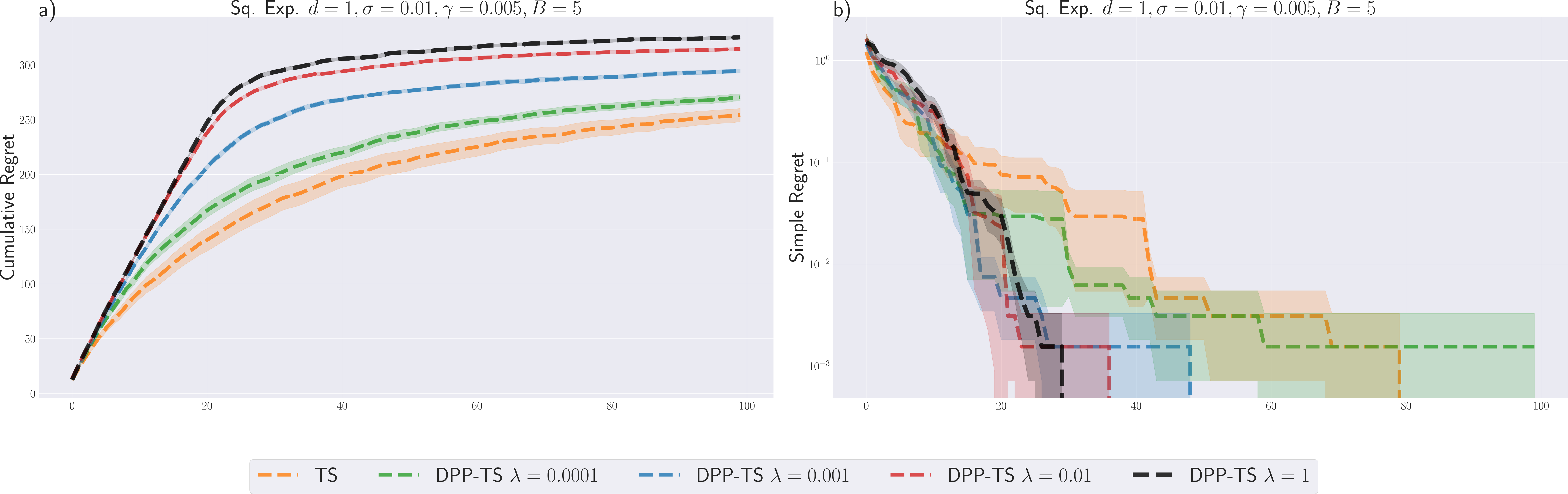}}
    \caption{Experimental comparison of DPP-TS with different $\lambda$ parameter in the parametrized mutual information DPP kernel, for both Cumulative and Simple Regret. Changing $\lambda$ corresponds to interpolating between the behaviors of TS and DPP-TS, with TS favoring Cumulative Regret and DPP-TS favoring Simple Regret.}
    \vspace{-0.3cm}
    \label{fig:experiment-lambdamult}
\end{figure}

Moreover, we investigate the use of time-varying $\lambda_t$ for the purpose of using DPP-TS as an \textit{initialization} phase for regular TS. Figure \ref{fig:experiment-lambdainit} illustrates a successful example of such a procedure: $\lambda_t$ is set to be equal to 1 (equivalent to the original DPP-TS formulation) up until iteration $T_{\text{init}}$, then equal to 0 (equivalent to TS). By doing this, we can constrain the DPP-TS over-exploration behavior to the very first batches we evaluate, and obtain both lower/equivalent Cumulative Regret than regular TS and lower Simple Regret, as seen in the cases for $T_{\text{init}} = 15$ and $T_{\text{init}}=24$.

The experiments for this Section all involve 1-d true functions sampled from a Gaussian Process with Squared Exponential kernel and $\gamma = 0.005$, evaluated on a discrete grid of 1024 points, with observational noise of $\sigma = 0.01$. The algorithms use a correctly-specified internal GP model with the same parameters of the true GP prior, and batch size is $B = 5$.

\begin{figure}[H]
    \includegraphics[width=\textwidth]{{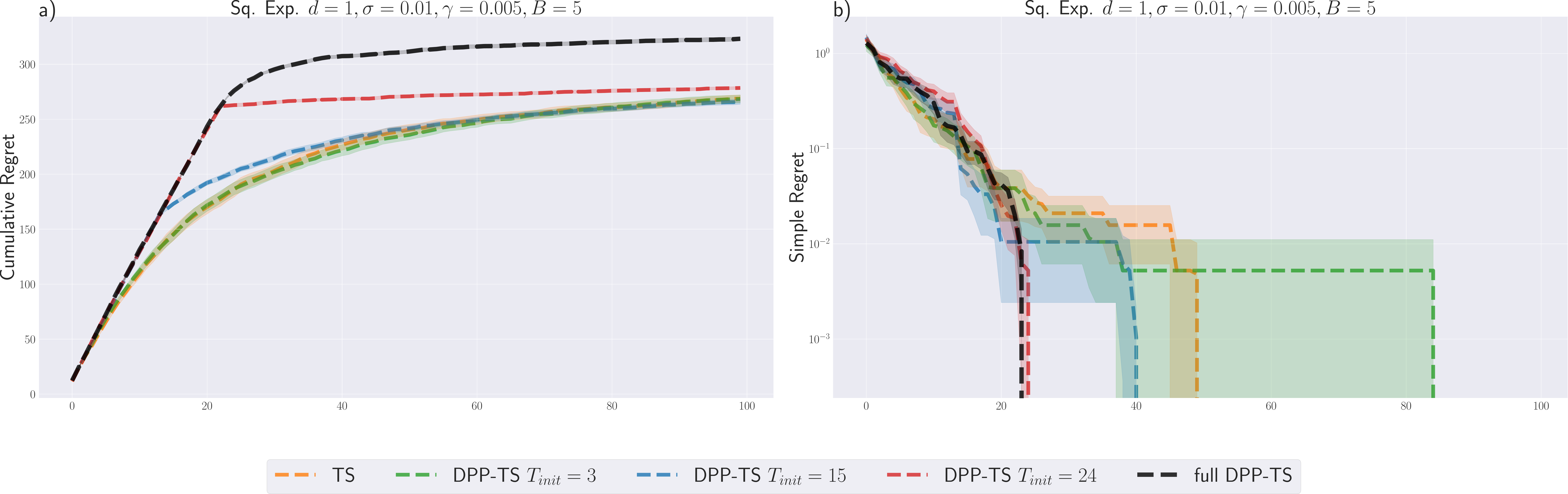}}
    \caption{Experimental comparison of DPP-TS when used as an initialization scheme for the first $T_{\text{init}}$ iterations, for both Cumulative and Simple Regret. It is apparent how, with properly tuned $T_{\text{init}}$, it is possible to maintain the fast Simple Regret convergence properties of DPP-TS while not overshooting classic TS in Cumulative Regret.}
    \vspace{-0.3cm}
    \label{fig:experiment-lambdainit}
\end{figure}

\subsection{DPP-TS and DPP-TS-alt comparison}

Throughout this work, we refer to DPP-TS as the procedure formalized in Algorithm \ref{dpptsalgo}, which is a simple and natural diversifying extension of classic TS. However, the algorithm we prove our theoretical Bayes Simple Regret bound on with Theorem \ref{dpptsbatchbound} is a slightly different procedure which we name DPP-TS-alt, that differs from DPP-TS in that it selects the first point of every batch with standard TS, a technicality required for the proof technique to work. As mentioned in the main text, the reason why we introduced DPP-TS as such and not DPP-TS-alt in the first place is both for simplicity and because in practice their performance is virtually identical.

In Figure \ref{fig:experiment-dpptsalt} we illustrate an experiment comparing DPP-TS and DPP-TS-alt (with classic TS for reference), showing performance of DPP-TS and DPP-TS-alt to be equivalent in both Cumulative and Simple Regret.

\begin{figure}[H]
    \includegraphics[width=\textwidth]{{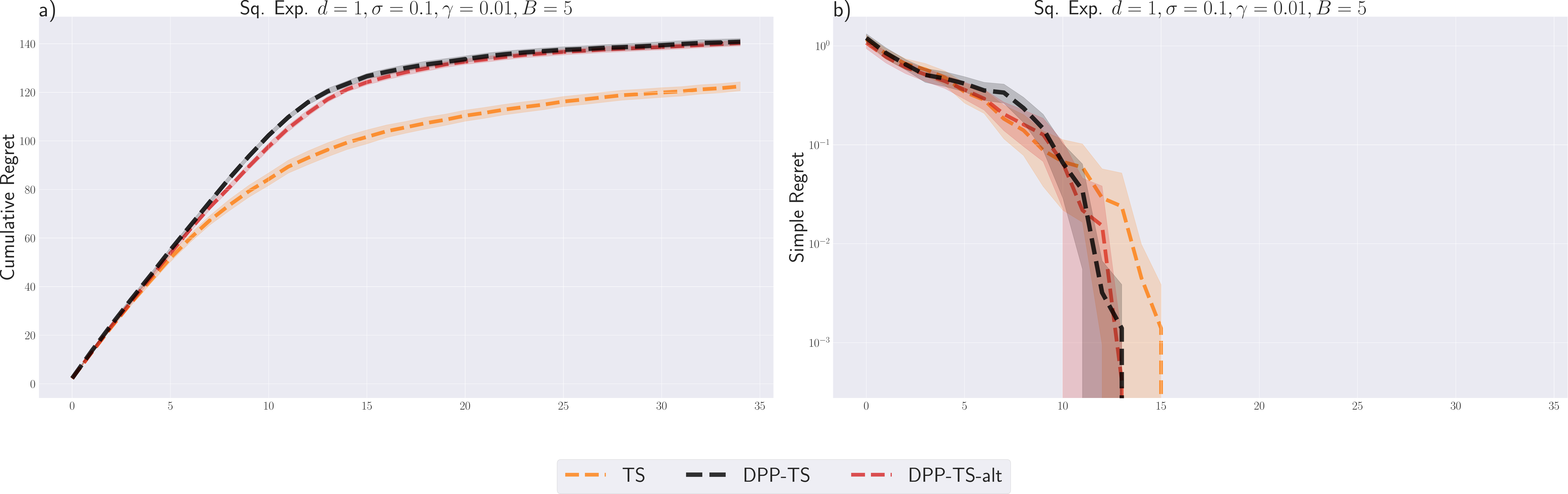}}
    \caption{Experimental comparison of DPP-TS and DPP-TS-alt, for both Cumulative and Simple Regret.}
    \vspace{-0.3cm}
    \label{fig:experiment-dpptsalt}
\end{figure}

\end{appendices}

\end{document}